\newtheorem{thm}{Theorem}[]
\newtheorem{lemma}{Lemma}[]
\newtheorem{cor}{Corollary}[]
\definecolor{dark-blue}{rgb}{0,0,0.75}
\setlist{nolistsep}
\newcommand{\btheta}{{\boldsymbol\theta}}
\icmltitlerunning{Data-Efficient Policy Evaluation Through Behavior Policy Search}
\begin{document}

\twocolumn[
\icmltitle{Data-Efficient Policy Evaluation Through Behavior Policy Search}

\icmlsetsymbol{equal}{*}

\begin{icmlauthorlist}
\icmlauthor{Josiah P. Hanna}{uta}
\icmlauthor{Philip S. Thomas}{umass,cmu}
\icmlauthor{Peter Stone}{uta}
\icmlauthor{Scott Niekum}{uta}
\end{icmlauthorlist}

\icmlaffiliation{uta}{The University of Texas at Austin, Austin, Texas, USA}
\icmlaffiliation{umass}{The University of Massachusetts, Amherst, Massachusetts, USA}
\icmlaffiliation{cmu}{Carnegie Mellon University, Pittsburgh, Pennsylvania, USA}

\icmlcorrespondingauthor{Josiah P. Hanna}{jphanna@cs.utexas.edu}

\icmlkeywords{off-policy evaluation, importance-sampling}

\vskip 0.3in
]

\printAffiliationsAndNotice{}

\begin{abstract}
We consider the task of evaluating a policy for a \textit{Markov decision process} (MDP). 
The standard unbiased technique for evaluating a policy is 
to deploy the policy and observe its performance.
We show that the data collected from deploying a different policy, commonly called the \textit{behavior policy}, can be used to produce unbiased estimates with lower mean squared error than this standard technique.
We derive an analytic expression for the \textit{optimal behavior policy}---the behavior policy that minimizes the mean squared error of the resulting estimates. 
Because this expression depends on terms that are unknown in practice, we propose a novel policy evaluation sub-problem, \textit{behavior policy search}: searching for a behavior policy that reduces mean squared error. 
We present a behavior policy search algorithm and empirically demonstrate its effectiveness in lowering the mean squared error of policy performance estimates.
%
\end{abstract}


\section{Introduction}

Many sequential decision problems, including diabetes treatment \cite{bastani2014model}, digital marketing \cite{theocharous2015personalized}, and robot control \cite{lillicrap2015continuous}, are modeled as \textit{Markov decision processes} (MDPs) and solved using \emph{reinforcement learning} (RL) algorithms. 
One important problem when applying RL to real problems is \textit{policy evaluation}. 
The goal in policy evaluation is to estimate the expected \textit{return} (sum of rewards) produced by a policy. 
We refer to this policy as the \emph{evaluation policy}, $\pi_e$. 
The standard policy evaluation approach is to repeatedly deploy $\pi_e$ and average the resulting returns. 
While this na\"{i}ve Monte Carlo estimator is unbiased, it may have high variance.

Methods that evaluate $\pi_e$ while selecting actions according to $\pi_e$ are termed \emph{on-policy}.
Previous work has addressed variance reduction for on-policy returns \cite{Zinkevich2006,White2009,Veness2011}. 
An alternative approach is to estimate the performance of $\pi_e$ while following a different, \emph{behavior policy}, $\pi_b$. 
Methods that evaluate $\pi_e$ with data generated from $\pi_b$ are termed \emph{off-policy}. 
\textit{Importance sampling} (IS) is one standard approach for using off-policy data in RL. 
IS reweights returns observed while executing $\pi_b$ such that they are unbiased estimates of the performance of $\pi_e$. 

Presently, IS is usually used when off-policy data is already available or when executing $\pi_e$ is impractical. 
If $\pi_b$ is not chosen carefully, IS often has high variance \cite{thomas2015off-policy}.
For this reason, an implicit assumption in the RL community has generally been that on-policy evaluation is more accurate when it is feasible. 
However, IS can also be used for variance reduction when done with an appropriately selected distribution of returns \cite{Hammersley1964}.
While IS-based variance reduction has been explored in RL, this prior work has required knowledge of the environment's transition probabilities and remains on-policy \cite{desai2001simulation,frank2008reinforcement,ciosek2017offer}.
In contrast to this earlier work, we show how careful selection of the behavior policy can lead to lower variance policy evaluation than using the evaluation policy and do not require knowledge of the environment's transition probabilities.

%
%
%
%

In this paper, we formalize the selection of $\pi_b$ as the \emph{behavior policy search} problem.
We introduce a method for this problem that adapts the policy parameters of $\pi_b$ with gradient descent on the variance of importance-sampling. 
Empirically we demonstrate behavior policy search with our method lowers the mean squared error of estimates compared to on-policy estimates.
To the best of our knowledge, this work is the first to propose adapting the behavior policy to obtain better policy evaluation in RL.
Furthermore we present the first method to address this problem.

\section{Preliminaries}

This section details the policy evaluation problem setting, the Monte Carlo and Advantage Sum on-policy methods, and importance-sampling for off-policy evaluation.

\subsection{Background}

We use notational standard MDPNv1 \citep{Thomas2015notation}, and for simplicity, we assume that $\mathcal{S},\mathcal{A},$ and $R$ are finite.\footnote{The  methods, and theoretical results discussed in this paper are applicable to both finite and infinite $\mathcal{S},\mathcal{A}$ and $R$ as well as \textit{partially-observable} Markov decision processes.} Let $H\coloneqq(S_0,A_0,R_0,S_1,\dotsc,S_{L},A_{L},R_{L})$ be a \textit{trajectory} and $g(H)\coloneqq \sum_{t=0}^{L} \gamma^t R_t$ be the \textit{discounted return} of trajectory $H$. Let $\rho(\pi)\coloneqq \mathbf{E}[g(H) | H \sim \pi]$ be the expected discounted return when the stochastic policy $\pi$ is used from $S_0$ sampled from the initial state distribution. 
In this work, we consider parameterized policies, $\pi_\btheta$, where the distribution over actions is determined by the vector $\btheta$.
We assume that the transitions and reward function are unknown and that $L$ is finite. 

We are given an \textit{evaluation policy}, $\pi_e$, for which we would like to estimate $\rho(\pi_e)$. 
We assume there exists a policy parameter vector $\btheta_e$ such that $\pi_e = \pi_{\btheta_e}$ and that this vector is known.
We consider an incremental setting where,
at iteration $i$, we sample a single trajectory $H_i$ with a policy $\pi_{\btheta_i}$ and add $\{H_i,\btheta_i\}$ to a set $\mathcal{D}$.
We use $\mathcal{D}_i$ to denote the set at iteration $i$.
Methods that always (i.e., $\forall i$) choose $\btheta_i = \btheta_e$ are on-policy; otherwise, the method is off-policy.
A policy evaluation method, $\operatorname{PE}$, uses all trajectories in $\mathcal{D}_i$ to estimate $\rho(\pi_e)$.
Our goal is to design a policy evaluation algorithm that produces estimates of $\rho(\pi_e)$ that have low \textit{mean squared error} (MSE). 
Formally, the goal of policy evaluation with PE is to minimize $(\operatorname{PE}(\mathcal{D}_i) - \rho(\pi_e))^2$.
While other measures of policy evaluation accuracy could be considered, we follow earlier work in using MSE (e.g., \cite{thomas2016data-efficient,precup2000eligibility}).

We focus on unbiased estimators of $\rho(\pi_e)$.
While biased estimators (e.g., bootstrapping methods \cite{sutton1998reinforcement}, approximate models \cite{kearns2002near}, etc.) can sometimes produce lower MSE estimates they are problematic for high risk applications requiring confidence intervals.
For unbiased estimators, minimizing variance is equivalent to minimizing MSE.

\subsection{Monte-Carlo Estimates}

Perhaps the most commonly used policy evaluation method is the \textit{on-policy Monte-Carlo} (MC) estimator. 
The estimate of $\rho(\pi_e)$ at iteration $i$ is the average return:
\[
\operatorname{MC}(\mathcal{D}_i) \coloneqq \frac{1}{i+1}\sum_{j=0}^i \sum_{t=0}^L \gamma^t R_t = \frac{1}{i + 1}\sum_{j=0}^i g(H_j).
\]

This estimator is unbiased and strongly consistent given mild assumptions.\footnote{Being a strongly consistent estimator of $\rho(\pi_e)$ means that $\Pr\left(\displaystyle\lim_{i\rightarrow \infty} \operatorname{MC}(\mathcal{D}_i )= \rho(\pi_e)\right) = 1$.  If $\rho(\pi_e)$ exists, $\operatorname{MC}$ is strongly consistent by the Khintchine Strong law of large numbers \cite{sen1993large}.} However, this method can have high variance.


\subsection{Advantage Sum Estimates}\label{sec:ase}

Like the Monte-Carlo estimator, the \textit{advantage sum} (ASE) estimator selects $\btheta_i=\btheta_e$ for all $i$. However, it introduces a control variate to reduce the variance without introducing  bias. 
This control variate requires an approximate model of the MDP to be provided.
Let the reward function of this model be given as $\hat{r}(s,a)$.
Let $\hat{q}^{\pi_e}(s_t,a_t) = \mathbf{E}[\sum_{t'=t}^L \gamma^{t'}\hat{r}(s_{t'},a_{t'}) ]$ and $\hat{v}^{\pi_e}(s_t) = \mathbf{E}[\hat{q}^{\pi_e}(s_t,a_t) | a_t \sim \pi_e]$, i.e., the action-value function and state-value function of $\pi_e$ in this approximate model.
Then, the advantage sum estimator is given by:
$$
\operatorname{AS}(D_i) \coloneqq \frac{1}{i+1}\sum_{j=0}^i \sum_{t=0}^L \gamma^t (R_t-\hat q^{\pi_e}(S_t,A_t) + \hat v^{\pi_e}(S_t)).
$$

Intuitively, ASE is replacing part of the randomness of the Monte Carlo return with the known expected return under the approximate model.
Provided $q^{\pi_e}(S_t,A_t) - \hat{v}^{\pi_e}(S_t)$ is sufficiently correlated with $R_t$, the variance of ASE is less than that of MC.

Notice that, like the MC estimator, the ASE estimator is \textit{on-policy}, in that the behavior policy is always the policy that we wish to evaluate. 
Intuitively it may seems like this choice should be optimal. 
However, we will show that it is not---selecting behavior policies that are different from the evaluation policy can result in estimates of $\rho(\pi_e)$ that have lower variance.

\subsection{Importance Sampling}

\textit{Importance Sampling} is a method for reweighting returns from a \emph{behavior} policy, $\btheta$, such that they are unbiased returns from the \emph{evaluation} policy.
In RL, the re-weighted IS return of a trajectory, $H$, sampled from $\pi_\btheta$ is:
$$
\operatorname{IS}(H,\btheta) \coloneqq g(H) \prod_{t=0}^L \frac{\pi_e(S_t|A_t)}{\pi_{\btheta}(S_t|A_t)}.
$$

The IS off-policy estimator is then a Monte Carlo estimate  of $\mathbf{E}\left[IS(H,\btheta) \middle| H \sim \pi_\btheta\right]$:
\[ \operatorname{IS}(\mathcal{D}_i) \coloneqq \frac{1}{i + 1}\sum_{j=0}^i \operatorname{IS}(H_j,\btheta_j).\]

In RL, importance sampling allows off-policy data to be used as if it were on-policy. 
In this case the variance of the IS estimate is often much worse than the variance of on-policy MC estimates because the behavior policy is not chosen to minimize variance, but is a policy that is dictated by circumstance.

\section{Behavior Policy Search}

Importance sampling was originally intended as a variance reduction technique for Monte Carlo evaluation \citep{Hammersley1964}.
When an evaluation policy rarely samples trajectories with high magnitude returns a Monte Carlo evaluation will have high variance.
If a behavior policy can increase the probability of observing such trajectories then the off-policy IS estimate will have lower variance than an on-policy Monte Carlo estimate.
In this section we first describe the theoretical potential for variance reduction with an appropriately selected behavior policy.
In general this policy will be unknown.
Thus, we propose a policy evaluation subproblem --- the behavior policy search problem --- solutions to which will adapt the behavior policy to provide lower mean squared error policy performance estimates.
To the best of our knowledge, we are the first to propose behavior policy adaptation for policy evaluation.

\subsection{The Optimal Behavior Policy}\label{sec:opt}

An appropriately selected behavior policy can lower variance to zero.
While this fact is generally known for importance-sampling, we show here that this policy exists for any MDP and evaluation policy under two restrictive assumptions: all returns are positive and the domain is deterministic.
In the following section we describe how an initial policy can be adapted towards the optimal behavior policy even when these conditions fail to hold.

Let $w_\pi(H) \coloneqq \prod_{t=0}^L \pi(A_t|S_t)$. 
Consider a behavior policy $\pi_b^\star$ such that for any trajectory, $H$:
\[ \rho(\pi_e) = \operatorname{IS}(H,\pi_b^\star) = g(H) \frac{w_{\pi_e}(H)}{w_{\pi_b^\star}(H)}  .\]
Rearranging the terms of this expressions yields:
\begin{align}
 w_{\pi_b^\star}(H) = g(H)\frac{w_{\pi_e}(H)}{\rho(\pi_e)}.
\end{align}
Thus, if we can select $\pi_b^\star$ such that the probability of observing any $H \sim \pi_b^\star$ is $\frac{g(H)}{\rho(\pi_e)}$ times the likelihood of observing $H \sim \pi_e$ then the $\operatorname{IS}$ estimate has zero MSE with only a single sampled trajectory.
Regardless of $g(H)$, the importance-sampled return will equal	 $\rho(\pi_e)$.

Furthermore, the policy $\pi_b^\star$ exists within the space of all feasible stochastic policies.
Consider that a stochastic policy can be viewed as a mixture policy over time-dependent (i.e., action selection depends on the current time-step) deterministic policies.
For example, in an MDP with one state, two actions and a horizon of $L$ there are $2^L$ possible time-dependent deterministic policies, each of which defines a unique sequence of actions.
We can express any evaluation policy as a mixture of these deterministic policies.
The optimal behavior policy $\pi_b^\star$ can be expressed similarly and thus the optimal behavior policy exists.

Unfortunately, the optimal behavior policy depends on the unknown value $\rho(\pi_e)$ as well as the unknown reward function $R$ (via $g(H)$). 
Thus, while there exists an optimal behavior policy for $\operatorname{IS}$ -- which is not $\pi_e$ -- in practice we cannot analytically determine $\pi_b^\star$.
Additionally, $\pi_b^\star$ may not be representable by any $\btheta$ in our policy class.

\subsection{The Behavior Policy Search Problem}

Since the optimal behavior policy cannot be analytically determined, we instead propose the behavior policy search (BPS) problem for finding $\pi_b$ that lowers the MSE of estimates of $\rho(\pi_e)$.
A BPS problem is defined by the inputs:
\begin{enumerate}
\item An evaluation policy $\pi_e$ with policy parameters $\btheta_e$.
\item An off-policy policy evaluation algorithm, $\operatorname{OPE}(H,\btheta)$, that takes a trajectory, $H \sim \pi_\btheta$, or, alternatively, a set of trajectories, and returns an estimate of $\rho(\pi_e)$.
\end{enumerate}
A BPS solution is a policy, $\pi_{\btheta_b}$ such that off-policy estimates with $\operatorname{OPE}$ have lower MSE than on-policy estimates.
Methods for this problem are BPS algorithms.

Recall we have formalized policy evaluation within an incremental setting where one trajectory for policy evaluation is generated each iteration.
At the $i^\text{th}$ iteration, a BPS algorithm selects a behavior policy that will be used to generate a trajectory, $H_i$. 
The policy evaluation algorithm, OPE, then estimates $\rho(\pi_e)$ using trajectories in $\mathcal{D}_i$.
Naturally, the selection of the behavior policy depends on how OPE estimates $\rho(\pi_e)$.

In a BPS problem, the $i^\text{th}$ iteration proceeds as follows. 
First, given all of the past behavior policies, $\{\btheta_i\}_{i=0}^{i-1}$, and the resulting trajectories, $\{H_i\}_{i=0}^{i-1}$, the BPS algorithm must select $\btheta_i$. 
The policy $\pi_{\btheta_i}$ is then run for one episode to create the trajectory $H_i$. 
Then the BPS algorithm uses OPE to estimate $\rho(\pi_e)$ given the available data, $D_i\coloneqq \{(\btheta_i, H_i)\}_{i=0}^i$.
In this paper, we consider the one-step problem of selecting $\btheta_i$ and estimating $\rho(\pi_e)$ at iteration $i$ in a way that minimizes MSE. 
That is, we do not consider how our selection of $\btheta_i$ will impact our future ability to select an appropriate $\btheta_j$ for $j > i$ and thus to produce more accurate estimates in the future.


One natural question is: if we are given a limit on the number of trajectories that can be sampled, is it better to  ``spend" some of our limited trajectories on BPS instead of using on-policy estimates?
Since each $OPE(H_i,\btheta_i)$ is an unbiased estimator of $\rho(\pi_e)$, we can use all sampled trajectories to compute $\operatorname{OPE}(\mathcal{D}_i)$.
Provided for all iterations, $\operatorname{Var}[\operatorname{OPE}(H,\btheta_i)] \leq Var[MC]$ then, in expectation, a BPS algorithm will always achieve lower MSE than MC, showing that it is, in fact, worthwhile to do so.
This claim is supported by our empirical study.


\section{Behavior Policy Gradient Theorem}

We now introduce our primary contributions: an analytic expression for the gradient of the mean squared error of the $\operatorname{IS}$ estimator and a stochastic gradient descent algorithm that adapts $\btheta$ to minimize the MSE between the IS estimate and $\rho(\pi_e)$.
Our algorithm --- \textbf{B}ehavior \textbf{P}olicy \textbf{G}radient (BPG) --- begins with on-policy estimates and adapts the behavior policy with gradient descent on the MSE with respect to $\btheta$.
The gradient of the MSE with respect to the policy parameters is given by the following theorem:

\begin{thm} \label{theorem}
{\small
\[ \frac{\partial}{\partial \btheta}\operatorname{MSE}[\operatorname{IS}(H, \btheta)] = \mathbf{E} \left [ -\operatorname{IS}(H,\btheta)^2 \sum_{t=0}^{L} \frac{\partial}{\partial \btheta} \log \pi_{\btheta}(A_t|S_t)  \right] \]
}
where the expectation is taken over $H \sim \pi_\btheta$.
\end{thm}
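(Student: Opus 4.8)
The plan is to exploit the fact that $\operatorname{IS}(H,\btheta)$ is an unbiased estimator of $\rho(\pi_e)$ for every choice of $\btheta$, which immediately collapses the mean squared error onto the variance. First I would write
\[
\operatorname{MSE}[\operatorname{IS}(H,\btheta)] = \operatorname{Var}[\operatorname{IS}(H,\btheta)] = \mathbf{E}\!\left[\operatorname{IS}(H,\btheta)^2\right] - \rho(\pi_e)^2,
\]
with the expectation taken over $H \sim \pi_\btheta$. Since $\rho(\pi_e)$ is independent of the behavior-policy parameters $\btheta$, its square contributes nothing under differentiation, and the problem reduces to computing $\frac{\partial}{\partial\btheta}\mathbf{E}[\operatorname{IS}(H,\btheta)^2]$.

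The conceptual subtlety is that $\btheta$ appears twice: in the sampling distribution $\Pr(H\mid\btheta)$ and in the integrand $\operatorname{IS}(H,\btheta)^2$. Writing the expectation as a sum over trajectories, $\mathbf{E}[\operatorname{IS}(H,\btheta)^2] = \sum_H \Pr(H\mid\btheta)\operatorname{IS}(H,\btheta)^2$, I would factor $\Pr(H\mid\btheta) = w_{\pi_\btheta}(H)\,c(H)$, where $c(H)$ collects the initial-state, transition, and reward terms that do not depend on $\btheta$ and $w_{\pi_\btheta}(H) = \prod_{t=0}^L \pi_\btheta(A_t\mid S_t)$. The key observation is that, because $\operatorname{IS}(H,\btheta)^2 = g(H)^2 w_{\pi_e}(H)^2 / w_{\pi_\btheta}(H)^2$, the single factor of $w_{\pi_\btheta}(H)$ coming from the sampling probability partially cancels, leaving
\[
\Pr(H\mid\btheta)\,\operatorname{IS}(H,\btheta)^2 = \frac{c(H)\,g(H)^2\,w_{\pi_e}(H)^2}{w_{\pi_\btheta}(H)},
\]
whose only $\btheta$-dependence is the lone $w_{\pi_\btheta}(H)$ in the denominator.

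With the $\btheta$-dependence isolated, the remaining step is a routine application of the log-derivative identity. Differentiating, I would use $\frac{\partial}{\partial\btheta} w_{\pi_\btheta}(H)^{-1} = -w_{\pi_\btheta}(H)^{-1}\sum_{t=0}^L \frac{\partial}{\partial\btheta}\log\pi_\btheta(A_t\mid S_t)$, then recognize $c(H)g(H)^2 w_{\pi_e}(H)^2 / w_{\pi_\btheta}(H) = \Pr(H\mid\btheta)\operatorname{IS}(H,\btheta)^2$ once more and re-fold the sum back into an expectation, yielding the claimed form. Equivalently, one can apply the product rule directly, producing a score-function term $\mathbf{E}[\operatorname{IS}(H,\btheta)^2\,\frac{\partial}{\partial\btheta}\log\Pr(H\mid\btheta)] = \mathbf{E}[\operatorname{IS}(H,\btheta)^2\sum_t \frac{\partial}{\partial\btheta}\log\pi_\btheta(A_t\mid S_t)]$ and an integrand term $\mathbf{E}[\frac{\partial}{\partial\btheta}\operatorname{IS}(H,\btheta)^2] = -2\,\mathbf{E}[\operatorname{IS}(H,\btheta)^2\sum_t \frac{\partial}{\partial\btheta}\log\pi_\btheta(A_t\mid S_t)]$, which combine to $-\mathbf{E}[\operatorname{IS}(H,\btheta)^2\sum_t \frac{\partial}{\partial\btheta}\log\pi_\btheta(A_t\mid S_t)]$; the coefficient cancellation $1-2=-1$ is exactly what produces the single negative sign. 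The main obstacle is technical rather than conceptual: justifying the interchange of differentiation and summation. In the finite setting assumed here ($\mathcal{S}$, $\mathcal{A}$, $R$, and $L$ all finite) the sum over trajectories is finite and the interchange is immediate; the only genuine requirement is the standard coverage condition that $w_{\pi_\btheta}(H) > 0$ for every trajectory given positive probability under $\pi_e$, which is precisely what makes $\operatorname{IS}$ well defined.
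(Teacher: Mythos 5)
Your proof is correct, and it reaches the theorem by a slightly cleaner route than the paper's. The paper proves a general lemma first: for any unbiased off-policy estimator $\operatorname{OPE}$, applying the product rule to $\sum_H \Pr(H\mid\btheta)\operatorname{OPE}(H,\btheta)^2$ yields a score-function term plus an integrand-derivative term, and then the theorem follows by separately computing $\frac{\partial}{\partial\btheta}\operatorname{IS}(H,\btheta)^2 = -2\operatorname{IS}(H,\btheta)^2\sum_t \frac{\partial}{\partial\btheta}\log\pi_\btheta(A_t\mid S_t)$ and adding the two contributions, with the $1-2=-1$ cancellation producing the final sign --- exactly the ``equivalently'' route you sketch at the end. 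Your primary argument instead performs the cancellation \emph{before} differentiating: since $\Pr(H\mid\btheta)\operatorname{IS}(H,\btheta)^2 = c(H)\,g(H)^2\,w_{\pi_e}(H)^2/w_{\pi_\btheta}(H)$ has all its $\btheta$-dependence in a single factor $w_{\pi_\btheta}(H)^{-1}$, one application of the log-derivative identity gives the result in one step, with no coefficient bookkeeping. What your streamlining gives up is reusability: the paper's two-term lemma is stated for an arbitrary unbiased estimator precisely so that it can be invoked again for the doubly robust estimator (Corollary 1), where no such clean cancellation of $w_{\pi_\btheta}$ occurs because the importance weights enter per time step; your shortcut is specific to the multiplicative structure of $\operatorname{IS}$. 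Your handling of the side conditions (finiteness justifying the interchange of differentiation and summation, and the support condition $w_{\pi_\btheta}(H)>0$ wherever $\pi_e$ places positive probability) is also sound, and indeed slightly more explicit than the paper's.
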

\begin{proof}\renewcommand{\qedsymbol}{}
Proofs for all theoretical results are included in Appendix A.
\end{proof}


BPG uses stochastic gradient descent in place of exact gradient descent: replacing the intractable expectation in Theorem 1 with an unbiased estimate of the true gradient.
In our experiments, we sample a batch, $\mathcal{B}_i$, of $k$ trajectories with $\pi_{\btheta_i}$ to lower the variance of the gradient estimate at iteration $i$.
In the BPS setting, sampling a batch of trajectories is equivalent to holding $\btheta$ fixed for $k$ iterations and then updating $\btheta$ with the $k$ most recent trajectories used to compute the gradient estimate.

Full details of BPG are given in Algorithm \ref{alg:bpg}.
At iteration $i$, BPG samples a batch, $\mathcal{B}_i$, of $k$ trajectories and adds $\{(\btheta_i,H_i)_{i=0}^k\}$ to a data set $\mathcal{D}$ (Lines 4-5).
Then BPG updates $\btheta$ with an empirical estimate of Theorem \ref{theorem} (Line 6).
After $n$ iterations, the $\operatorname{BPG}$ estimate of $\rho(\pi_e)$ is $\operatorname{IS}(\mathcal{D}_n)$ as defined in Section 2.4.

\begin{algorithm}

{\fontsize{9}{9}\selectfont
\begin{algorithmic}[1]

\STATE $\btheta_0 \leftarrow \btheta_e$
\STATE $\mathcal{D}_0 = \{\}$
\FORALL{$i \in 0...n$}
\STATE $\mathcal{B}_i=$ Sample $k$ trajectories $H \sim \pi_{\btheta_i}$
\STATE $\mathcal{D}_{i+1} = \mathcal{D}_i \cup \mathcal{B}_i$
\STATE $\btheta_{i+1} = \btheta_i + \frac{\alpha_i}{k} \displaystyle\sum_{H \in \mathcal{B}} \operatorname{IS}(H,\btheta)^2 \displaystyle\sum_{t=0}^L \frac{\partial}{\partial\btheta} \log \pi_{\btheta_i}(A_t|S_t)$

\ENDFOR

\STATE \textbf{Return} $\btheta_n$, $\operatorname{IS}(\mathcal{D}_n)$

\end{algorithmic}
}
\caption{\textbf{Behavior Policy Gradient} \newline \textbf{Input:} Evaluation policy parameters, $\btheta_e$, batch size $k$, a step-size for each iteration, $\alpha_i$, and number of iterations $n$.
\newline \textbf{Output:} Final behavior policy parameters $\btheta_n$ and the IS estimate of $\rho(\pi_e)$ using all sampled trajectories.}
\label{alg:bpg}
\end{algorithm}

Given that the step-size, $\alpha_i$, is consistent with standard gradient descent convergence conditions, $\operatorname{BPG}$ will converge to a behavior policy that locally minimizes the variance \cite{bertsekas2000gradient}. 
At best, $\operatorname{BPG}$ converges to the globally optimal behavior policy \emph{within the parameterization of $\pi_e$}.
Since the parameterization of $\pi_e$ determines the class of representable distributions it is possible that the theoretically optimal behavior policy is unrepresentable under this parameterization.
Nevertheless, a suboptimal behavior policy still yields better estimates of $\rho(\pi_e)$, provided it decreases variance compared to on-policy returns.

\subsection{Control Variate Extension}

In cases where an approximate model is available, we can further lower variance adapting the behavior policy of the \emph{doubly robust} estimator \cite{jiang2016doubly, thomas2016data-efficient}.
Based on a similar intuition as the Advantage Sum estimator (Section \ref{sec:ase}), the Doubly Robust (DR) estimator uses the value functions of an approximate model as a control variate to lower the variance of importance-sampling.\footnote{DR lowers the variance of \emph{per-decision} importance-sampling which importance samples the per time-step reward.}
We show here that we can adapt the behavior policy to lower the mean squared error of DR estimates.
We denote this new method DR-BPG for \textbf{D}oubly \textbf{R}obust \textbf{B}ehavior \textbf{P}olicy \textbf{G}radient.

Let $w_{\pi,t}(H) = \prod_{i=0}^t \pi(A_t|S_t)$ and recall that $\hat{v}^{\pi_e}$ and $\hat{q}^{\pi_e}$ are the state and action value functions of $\pi_e$ in the approximate model.
The DR estimator is:
{\small
\[
\operatorname{DR}(H,\btheta)\coloneqq \hat{v}(S_0) + \sum_{t=0}^L \frac{w_{\pi_e,t}}{w_{\pi_\btheta,t}}(R_t - \hat{q}^{\pi_e}(S_t,A_t) + \hat{v}^{\pi_e}(S_{t+1})).
\]
}


We can reduce the mean squared error of DR with gradient descent using unbiased estimates of the following corollary to Theorem 1:
\begin{cor}
{
\small
\begin{multline*}
\frac{\partial}{\partial\btheta}\operatorname{MSE}\left[DR(H,\btheta)\right] = \mathbf{E} [( \operatorname{DR}(H,\btheta)^2 \sum_{t=0}^L \frac{\partial}{\partial\btheta}\log \pi_\btheta(A_t|S_t) \\  -   2 \operatorname{DR}(H,\btheta)(\sum_{t=0}^L \gamma^t \delta_t\frac{w_{\pi_e,t}}{w_{\btheta,t}} \sum_{i=0}^t \frac{\partial}{\partial\btheta} \log \pi_\btheta (A_i|S_i))  ]
\end{multline*}
}
where $\delta_t = R_t - \hat{q}(S_t,A_t) + \hat{v}(S_{t+1})$ and the expectation is taken over $H \sim \pi_\btheta$.
\end{cor}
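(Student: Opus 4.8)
The plan is to mirror the derivation behind Theorem~\ref{theorem}, the only genuinely new ingredient being that $\operatorname{DR}(H,\btheta)$ depends on $\btheta$ both through the sampling distribution \emph{and} explicitly through its per-step importance weights $w_{\pi_e,t}/w_{\btheta,t}$. First I would record that $\operatorname{DR}$ is an unbiased estimator of $\rho(\pi_e)$ for every $\btheta$, a fact already established for the doubly robust estimator \cite{jiang2016doubly,thomas2016data-efficient}. Because the bias is identically zero, $\operatorname{MSE}[\operatorname{DR}(H,\btheta)] = \operatorname{Var}[\operatorname{DR}(H,\btheta)] = \mathbf{E}[\operatorname{DR}(H,\btheta)^2] - \rho(\pi_e)^2$, and since $\rho(\pi_e)$ does not depend on $\btheta$, we get $\frac{\partial}{\partial\btheta}\operatorname{MSE}[\operatorname{DR}(H,\btheta)] = \frac{\partial}{\partial\btheta}\mathbf{E}[\operatorname{DR}(H,\btheta)^2]$. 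This reduces the claim to differentiating a second moment, exactly as in the proof of Theorem~\ref{theorem}.

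Next I would write $\mathbf{E}[\operatorname{DR}(H,\btheta)^2] = \sum_H p_\btheta(H)\operatorname{DR}(H,\btheta)^2$ and factor the trajectory probability as $p_\btheta(H) = w_{\pi_\btheta}(H)\,T(H)$, where $T(H)$ collects the initial-state and transition probabilities and is independent of $\btheta$. Differentiating under the (finite) sum by the product rule produces two contributions. The first comes from $\frac{\partial}{\partial\btheta}p_\btheta(H)$; applying the likelihood-ratio identity $\frac{\partial}{\partial\btheta}p_\btheta(H) = p_\btheta(H)\sum_{t=0}^L \frac{\partial}{\partial\btheta}\log\pi_\btheta(A_t|S_t)$ recovers $\mathbf{E}[\operatorname{DR}(H,\btheta)^2\sum_{t=0}^L \frac{\partial}{\partial\btheta}\log\pi_\btheta(A_t|S_t)]$, the first term of the corollary. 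This step is essentially identical to the argument for Theorem~\ref{theorem}.

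The new work is the second contribution, $\mathbf{E}[2\,p_\btheta(H)\operatorname{DR}(H,\btheta)\,\partial_\btheta\operatorname{DR}(H,\btheta)]$, where $\partial_\btheta$ denotes the explicit partial derivative of the estimator. Here I would differentiate each truncated weight using $\frac{\partial}{\partial\btheta}\frac{1}{w_{\pi_\btheta,t}(H)} = -\frac{1}{w_{\pi_\btheta,t}(H)}\sum_{i=0}^t \frac{\partial}{\partial\btheta}\log\pi_\btheta(A_i|S_i)$, noting that $\hat v(S_0)$ and each $\delta_t = R_t - \hat{q}(S_t,A_t) + \hat{v}(S_{t+1})$ are $\btheta$-free. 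Collecting terms yields the factor $-\sum_{t=0}^L \gamma^t\delta_t \frac{w_{\pi_e,t}}{w_{\btheta,t}}\sum_{i=0}^t \frac{\partial}{\partial\btheta}\log\pi_\btheta(A_i|S_i)$, which together with the $2\operatorname{DR}(H,\btheta)$ prefactor is exactly the second term; combining the two contributions gives the stated expression.

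The main obstacle, and the place where care is required, is precisely this explicit dependence: it is tempting to copy the $\operatorname{IS}$ proof verbatim and silently drop the second product-rule term, so the key is to keep the sampling-density derivative and the estimator derivative cleanly separated. The secondary bookkeeping subtleties are the nested truncated products $w_{\pi_\btheta,t} = \prod_{i=0}^t \pi_\btheta(A_i|S_i)$, which produce the inner sum $\sum_{i=0}^t$ rather than a full sum over the horizon, and matching the discount factors $\gamma^t$ between the estimator and the $\delta_t$ notation. Finally I would remark that all interchanges of differentiation and summation are justified because $L$ is finite and $\mathcal{S},\mathcal{A},R$ are finite, so every sum involved is finite.
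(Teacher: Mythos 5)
Your proposal is correct and takes essentially the same route as the paper: the paper proves a general lemma giving $\frac{\partial}{\partial\btheta}\operatorname{MSE}[\operatorname{OPE}(H,\btheta)]$ for any unbiased off-policy estimator (via exactly your unbiasedness, density-factorization, product-rule, and likelihood-ratio steps) and then substitutes the explicit derivative of $\operatorname{DR}(H,\btheta)^2$ computed with the same truncated-weight identity $\frac{\partial}{\partial\btheta}\frac{1}{w_{\btheta,t}} = -\frac{1}{w_{\btheta,t}}\sum_{i=0}^t \frac{\partial}{\partial\btheta}\log\pi_\btheta(A_i|S_i)$ that you use. The only differences are cosmetic---you re-derive the lemma inline rather than isolating it, and your intermediate expression for the second product-rule term should not carry the density $p_\btheta(H)$ inside the expectation symbol---but the argument and the resulting terms match the paper's proof.
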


The first term of $\frac{\partial}{\partial\btheta}MSE$ is analogous to the gradient of the importance-sampling estimate with $\operatorname{IS}(H,\btheta)$ replaced by $\operatorname{DR}(H,\btheta)$.
The second term accounts for the covariance of the DR terms.

AS and DR both assume access to a model, however, they make no assumption about where the model comes from except that it must be independent of the trajectories used to compute the final estimate.
In practice, AS and DR perform best when all trajectories are used to estimate the model and then used to estimate $\rho(\pi_e)$ \cite{thomas2016data-efficient}.
However, for DR-BPG, changes to the model change the surface of the MSE objective we seek to minimize and thus DR-BPG will only converge once the model stops changing.
In our experiments, we consider both a changing and a fixed model.



\subsection{Connection to REINFORCE}

BPG is closely related to existing work in policy gradient RL (c.f., \cite{sutton2000policy}) and we draw connections between one such method and BPG to illustrate how BPG changes the distribution of trajectories.
REINFORCE \cite{williams1992simple} attempts to maximize $\rho(\pi_\btheta)$ through gradient ascent on $\rho(\pi_\btheta)$ using the following unbiased gradient of $\rho(\pi_\btheta)$:
\[\frac{\partial}{\partial\btheta}\rho(\pi_\btheta) = \mathbf{E}\left[g(H) \sum_{t=0}^L \frac{\partial}{\partial\btheta} \log \pi_\btheta(A_t | S_t) \middle | H \sim \pi_\btheta \right]. \]
Intuitively, REINFORCE increases the probability of all actions taken during $H$ as a function of $g(H)$. 
This update increases the probability of actions that lead to high return trajectories.
BPG can be interpreted as REINFORCE where the return of a trajectory is the square of its importance-sampled return.
Thus BPG increases the probability of all actions taken along $H$ as a function of $IS(H,\btheta)^2$.
The magnitude of $IS(H,\btheta)^2$ depends on two qualities of $H$:
\begin{enumerate}
\item $g(H)^2$ is large (i.e., a high magnitude event).
\item $H$ is rare relative to its probability under the evaluation policy (i.e., $\prod_{t=0}^L \frac{\pi_e(A_t|S_t)}{\pi_\btheta(A_t|S_t)}$ is large).
\end{enumerate}

These two qualities demonstrate a balance in how BPG changes trajectory probabilities.
Increasing the probability of a trajectory under $\pi_\btheta$ will decrease $IS(H,\btheta)^2$ and so BPG increases the probability of a trajectory when $g(H)^2$ is large enough to offset the decrease in $IS(H,\btheta)^2$ caused by decreasing the importance weight.

\section{Empirical Study}

This section presents an empirical study of variance reduction through behavior policy search.
We design our experiments to answer the following questions:
\begin{itemize}
\item Can behavior policy search with BPG reduce policy evaluation MSE compared to on-policy estimates in both tabular and continuous domains?
\item Does adapting the behavior policy of the Doubly Robust estimator with DR-BPG lower the MSE of the Advantage Sum estimator?
\item Does the rarety of actions that cause high magnitude rewards affect the performance gap between BPG and Monte Carlo estimates?

\end{itemize}

\subsection{Experimental Set-up}

We address our first experimental question by evaluating BPG in three domains.
We briefly describe each domain here; full details are available in appendix C.

The first domain is a 4x4 Gridworld.
We obtain two evaluation policies by applying REINFORCE to this task, starting from a policy that selects actions uniformly at random.
We then select one evaluation policy, $\pi_1$, from the early stages of learning -- an improved policy but still far from converged -- and one after learning has converged, $\pi_2$.
We run all experiments once with $\pi_e:=\pi_1$ and a second time with $\pi_e:=\pi_2$.


Our second and third tasks are the continuous control Cart-pole Swing Up and Acrobot tasks implemented within RLLAB \cite{duan2016benchmarking}.
The evaluation policy in each domain is a neural network that maps the state to the mean of a Gaussian distribution.
Policies are partially optimized with trust-region policy optimization \cite{schulman2015trust} applied to a randomly initialized policy.

\subsection{Main Results}

\paragraph{Gridworld Experiments}

Figure \ref{fig:gridworld} compares BPG to Monte Carlo for both Gridworld policies, $\pi_1$ and $\pi_2$.
Our main point of comparison is the mean squared error (MSE) of both estimates at iteration $i$ over $100$ trials.
For $\pi_1$, BPG significantly reduces the MSE of on-policy estimates (Figure \ref{fig:gw-p1-mse}).
For $\pi_2$, BPG also reduces MSE, however, it is only a marginal improvement.

At the end of each trial we used the final behavior policy to collect $100$ more trajectories and estimate $\rho(\pi_e)$.
In comparison to a Monte Carlo estimate with $100$ trajectories from $\pi_1$, MSE is 85.48 \% lower with this improved behavior policy.
For $\pi_2$, the MSE is 31.02 \% lower.
This result demonstrates that BPG can find behavior policies that substantially lower MSE.


To understand the disparity in performance between these two instances of policy evaluation, we plot the distribution of $g(H)$ under $\pi_e$ (Figures \ref{fig:gw-p1-ret} and \ref{fig:gw-p2-ret}).
These plots show the variance of $\pi_1$ to be much higher; it sometimes samples returns with twice the magnitude of any sampled by $\pi_2$.
To quantify this difference, we also measure the variance of $\operatorname{IS}(H,\btheta_i)$ as $\mathbf{E}\left [\operatorname{IS}(H)^2 \middle | H \sim \pi_{\btheta_i} \right ] - \mathbf{E}\left[\operatorname{IS}(H)\middle | H \sim \pi_{\btheta_i}\right] ^2 $ where the expectations are estimated with 10,000 trajectories.
This evaluation is repeated 5 times per iteration and the reported variance is the mean over these evaluations.
The decrease in variance for each policy is shown in Figure \ref{fig:gw-p1-var}.
The high initial variance means there is much more room for BPG to improve the behavior policy when $\btheta_e$ is the partially optimized policy.

\begin{figure}[t!]

\centering
\begin{subfigure}{0.45\columnwidth}
\includegraphics[scale=0.12]{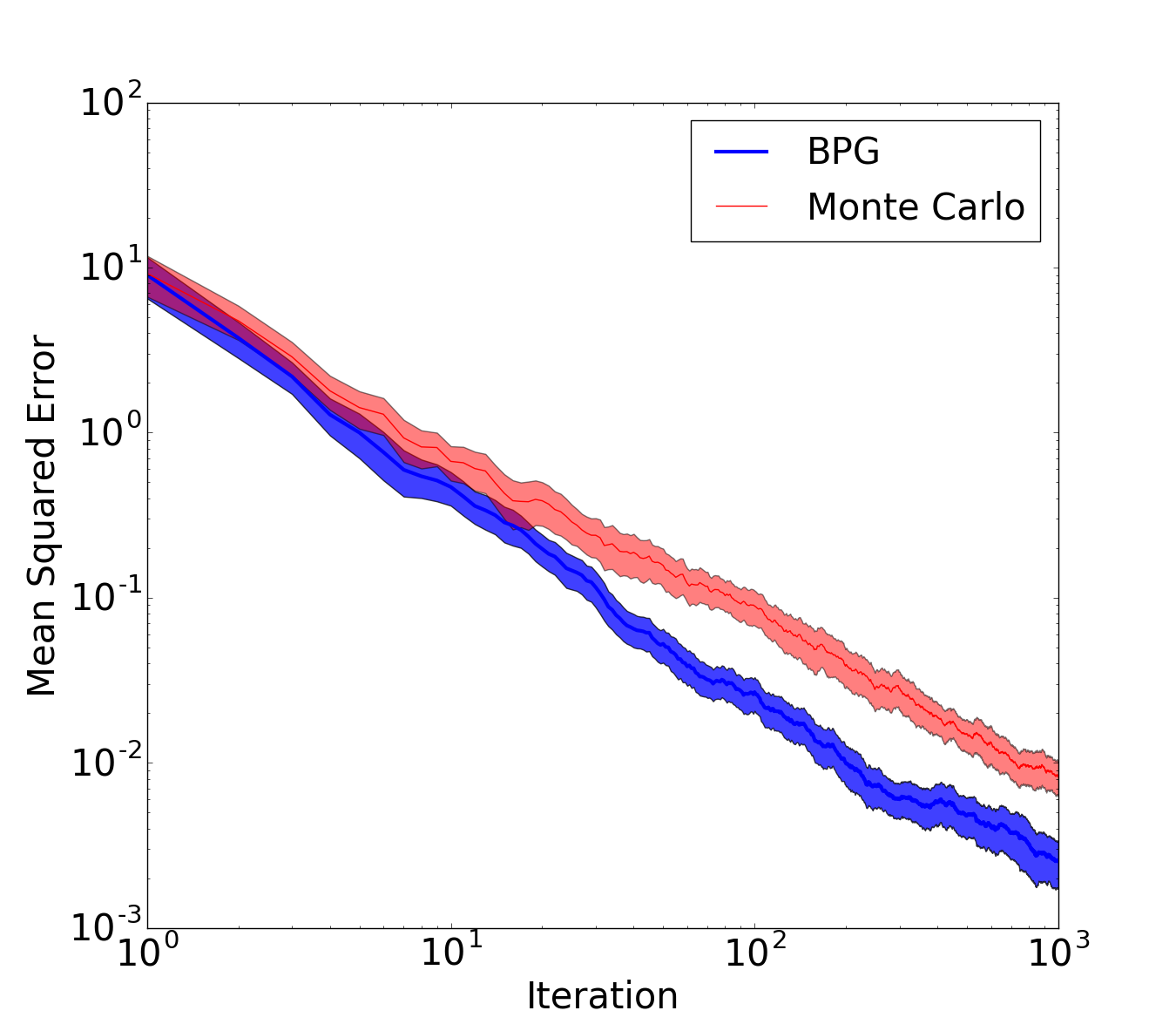}
\caption{Mean Squared Error}
\label{fig:gw-p1-mse}
\end{subfigure}
\begin{subfigure}{0.45\columnwidth}
\includegraphics[scale=0.12]{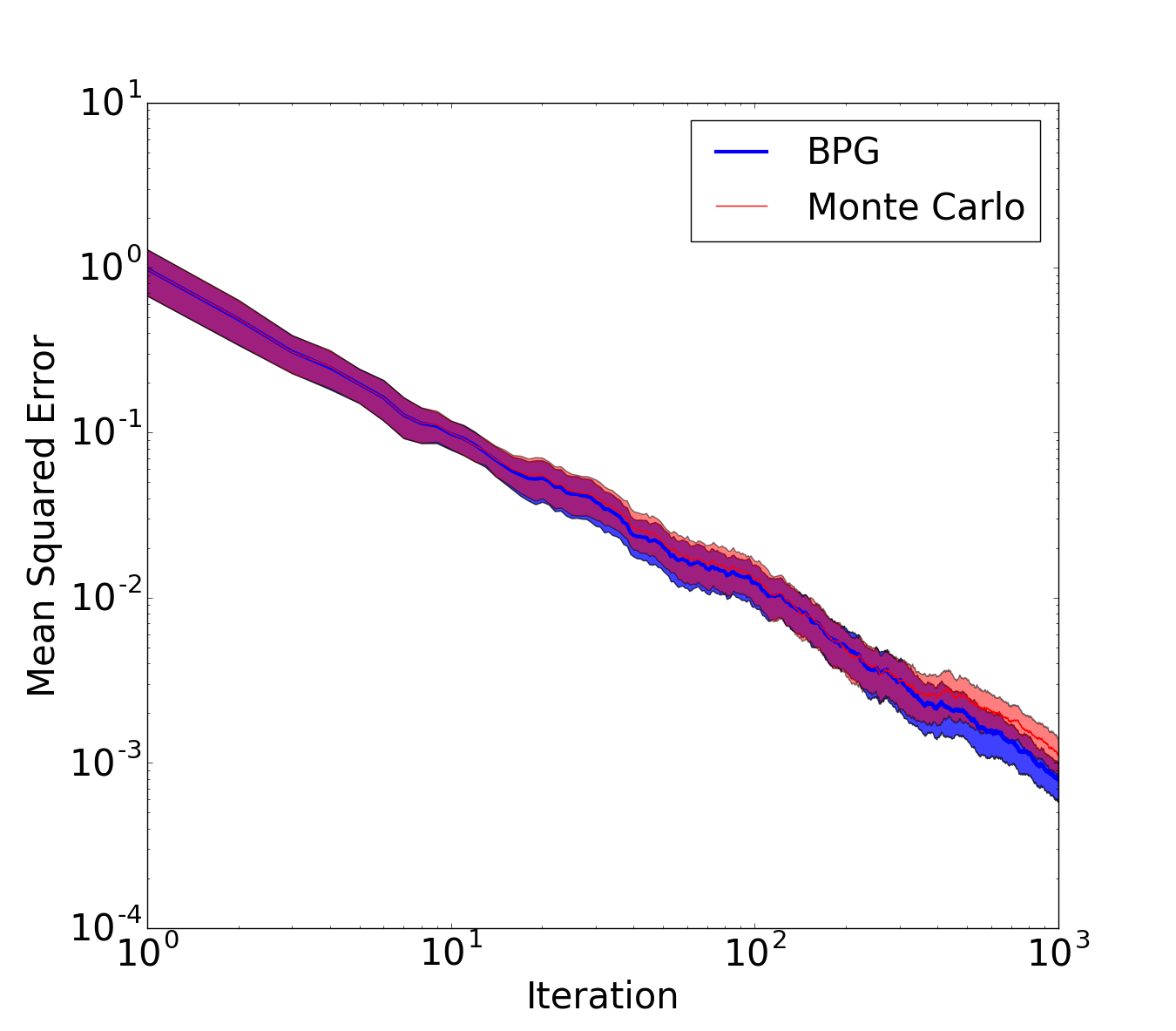}
\caption{Mean Squared Error}
\label{fig:gw-p2-mse}
\end{subfigure}

\centering
\begin{subfigure}{0.45\columnwidth}
\includegraphics[scale=0.12]{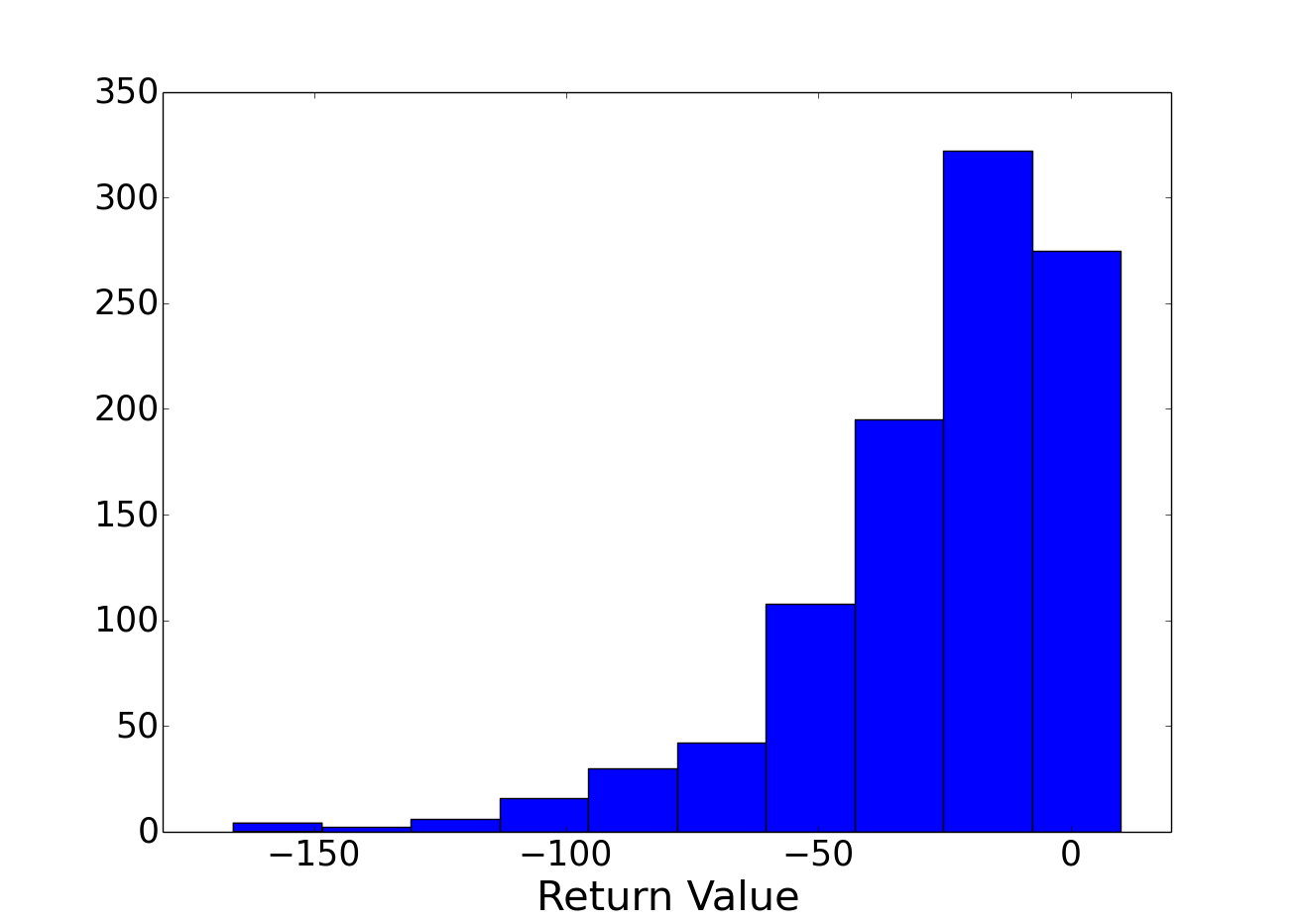}
\caption{Histogram of $\pi_1$ Returns}
\label{fig:gw-p1-ret}
\end{subfigure}
\begin{subfigure}{0.45\columnwidth}
\includegraphics[scale=0.12]{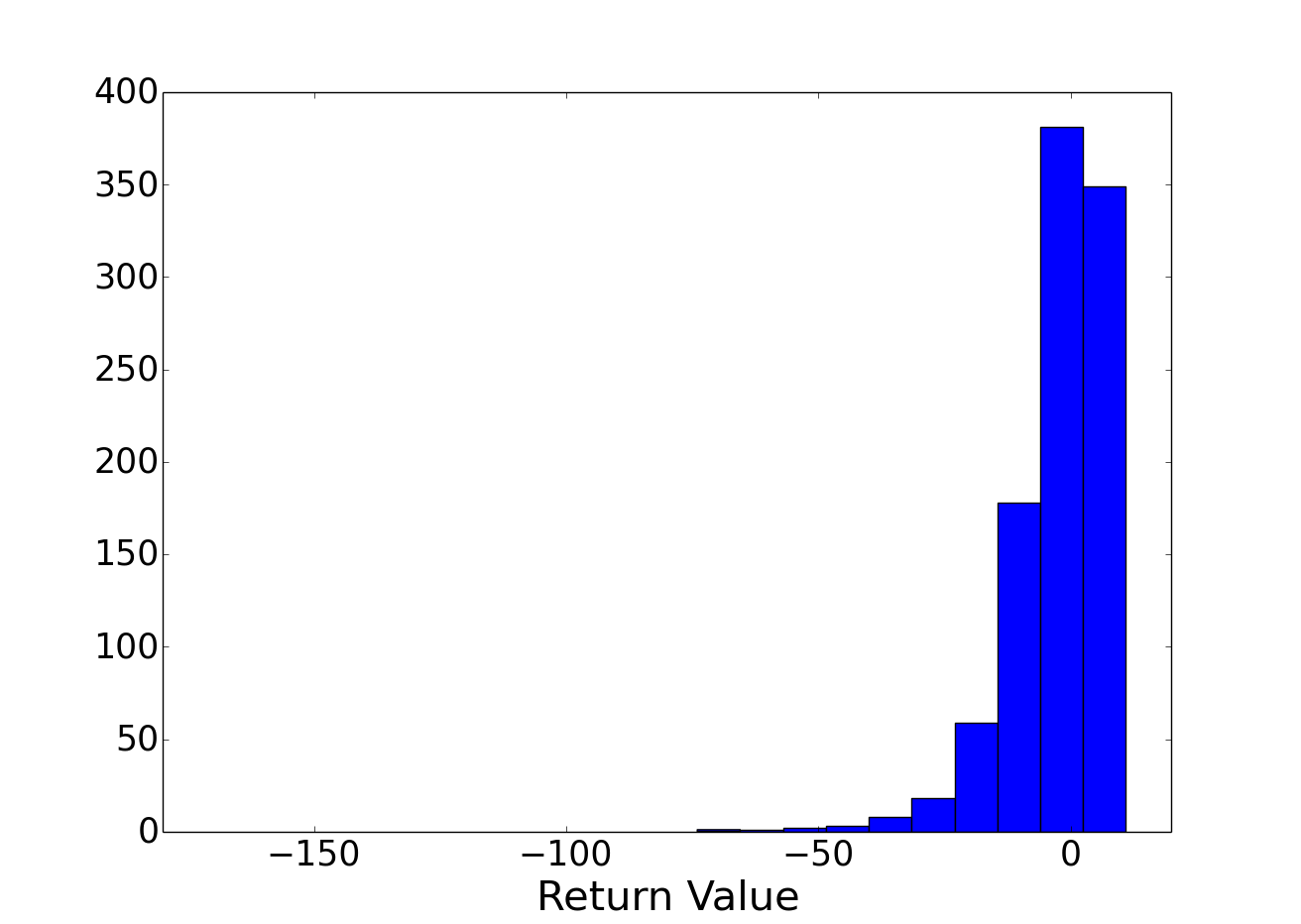}
\caption{Histogram of $\pi_2$ Returns}
\label{fig:gw-p2-ret}
\end{subfigure}

\centering
\begin{subfigure}{0.45\columnwidth}
\includegraphics[scale=0.12]{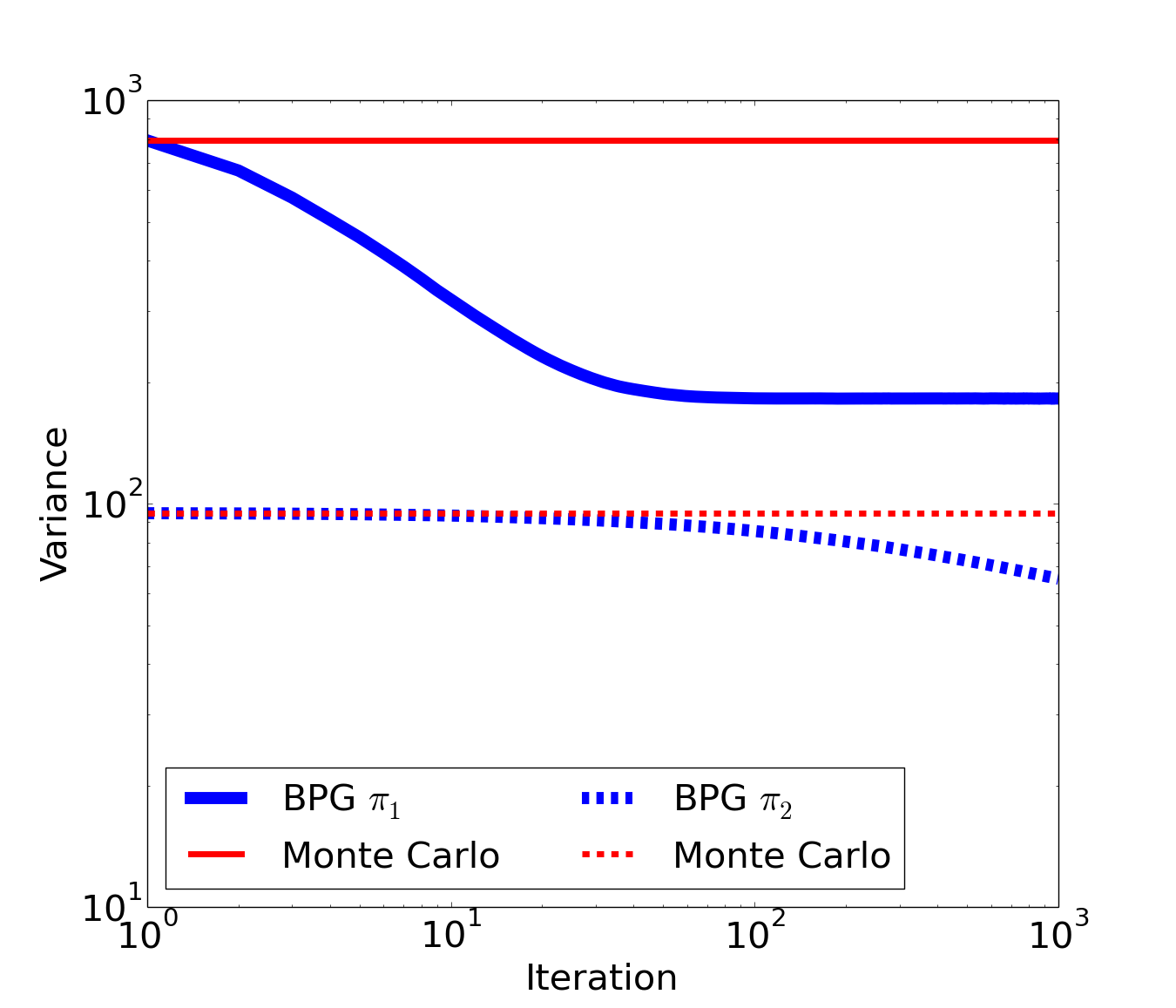}
\caption{Variance Reduction}
\label{fig:gw-p1-var}
\end{subfigure}
\begin{subfigure}{0.45\columnwidth}
\includegraphics[scale=0.12]{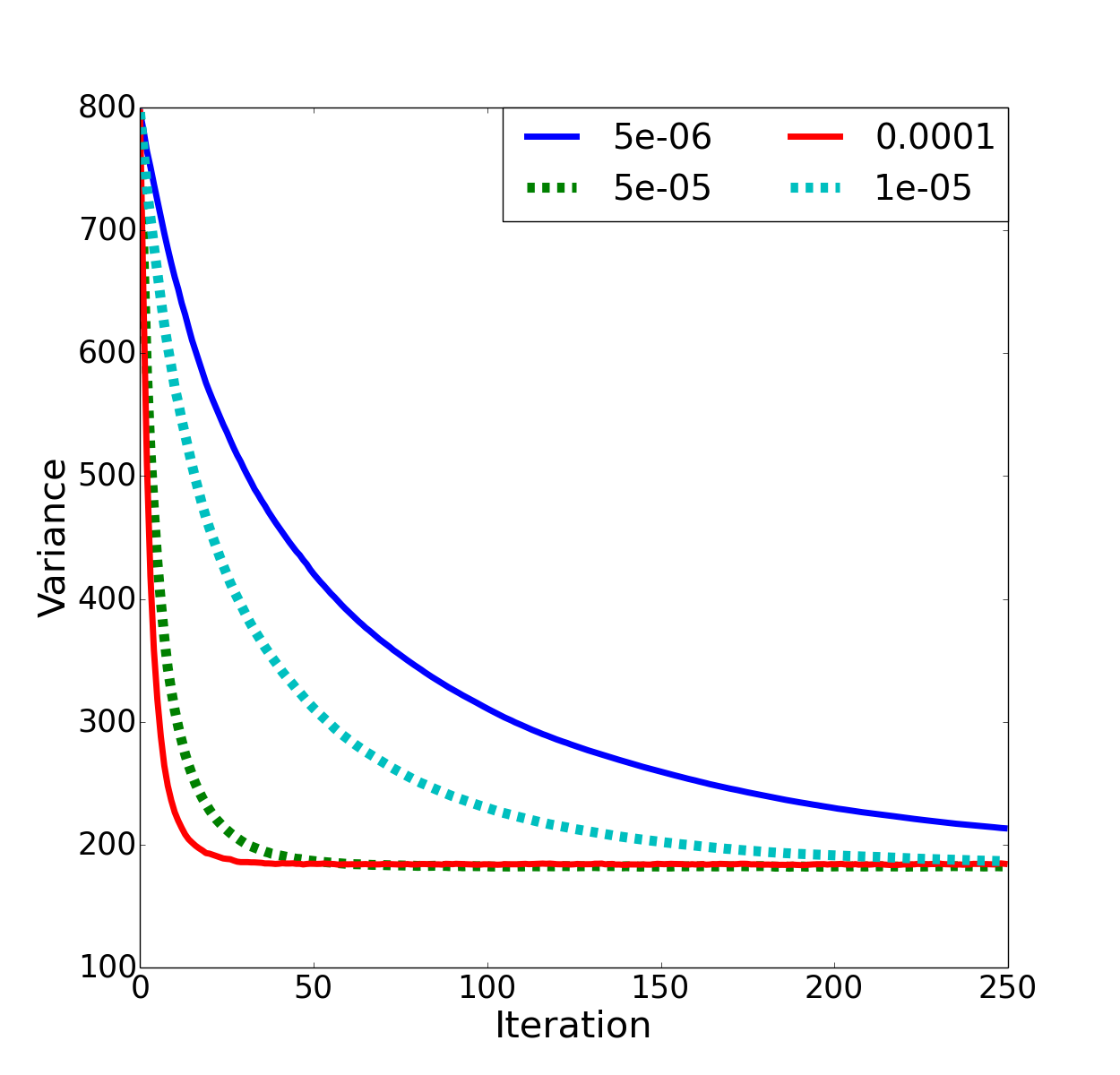}
\caption{Learning Rate Sensitivity}
\label{fig:gw-p2-var}
\end{subfigure}

\caption{Gridworld experiments when $\pi_e$ is a partially optimized policy, $\pi_1$, (\ref{fig:gw-p1-mse}) and a converged policy, $\pi_2$, (\ref{fig:gw-p2-mse}). The first and second rows give results for $\pi_1$ on the left and $\pi_2$ on the right. Results are averaged over $100$ trials of $1000$ iterations with error bars given for 95 \% confidence intervals. In both instances, BPG lowers MSE more than on-policy Monte Carlo returns (statistically significant, $p<0.05$). The second row shows the distribution of returns under the two different $\pi_e$. Figure \ref{fig:gw-p1-var} shows a substantial decrease in variance when evaluating $\pi_1$ with BPG and a lesser decrease when evaluating $\pi_2$ with BPG. Figure \ref{fig:gw-p2-var} shows the effect of varying the step-size parameter for representative $\alpha$ (BPG diverged for high settings of $\alpha$). We ran BPG for 250 iterations per value of $\alpha$ and averaged over 5 trials. Axes in \ref{fig:gw-p1-mse}, \ref{fig:gw-p2-mse}, and \ref{fig:gw-p1-var} are log-scaled.}

\label{fig:gridworld}
\end{figure}

We also test the sensitivity of BPG to the learning rate parameter.
A critical issue in the use of BPG is selecting the step size parameter $\alpha$.
If $\alpha$ is set too high we risk making too large of an update to $\btheta$ --- potentially stepping to a worse behavior policy.
If we are too conservative then it will take many iterations for a noticeable improvement over Monte Carlo estimation.
Figure \ref{fig:gw-p2-var} shows variance reduction for a number of different $\alpha$ values in the GridWorld domain.
We found BPG in this domain was robust to a variety of step size values.
We do not claim this result is representative for all problem domains; stepsize selection in the behavior policy search problem is an important area for future work.


\paragraph{Continuous Control}

Figure \ref{fig:swing} shows reduction of MSE on the Cartpole Swing-up and Acrobot domains.
Again we see that BPG reduces MSE faster than Monte Carlo evaluation.
In contrast to the discrete Gridworld experiment, this experiment demonstrates the applicability of BPG to the continuous control setting.
While BPG significantly outperforms Monte Carlo evaluation in Cart-pole Swing-up, the gap is much smaller in Acrobot.
This result also demonstrates BPG (and behavior policy search) when the policy must generalize across different states.

\begin{figure}
\begin{subfigure}{0.45\columnwidth}
\includegraphics[scale=0.12]{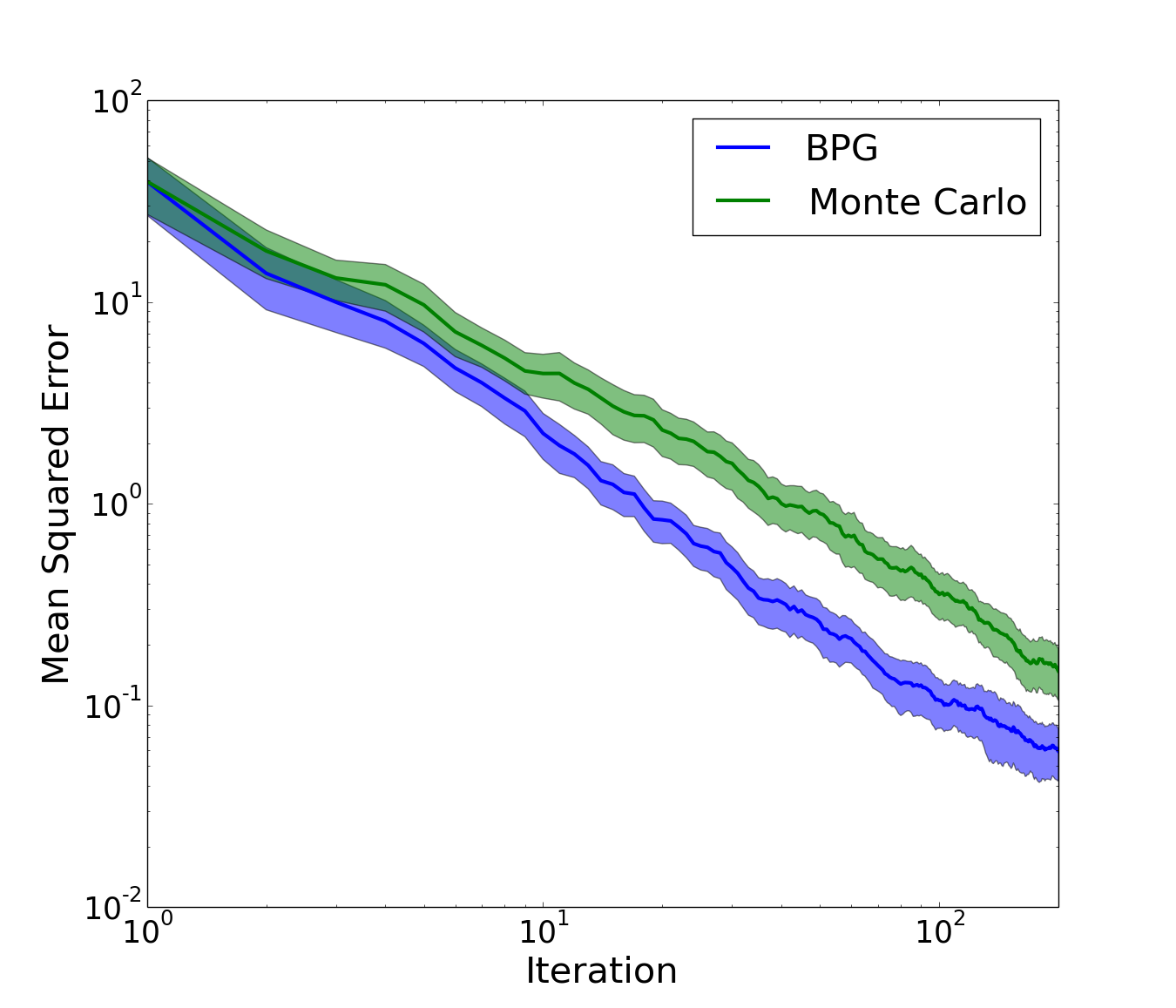}
\caption{Cart-pole Swing Up MSE}
\end{subfigure}
\begin{subfigure}{0.45\columnwidth}
\includegraphics[scale=0.12]{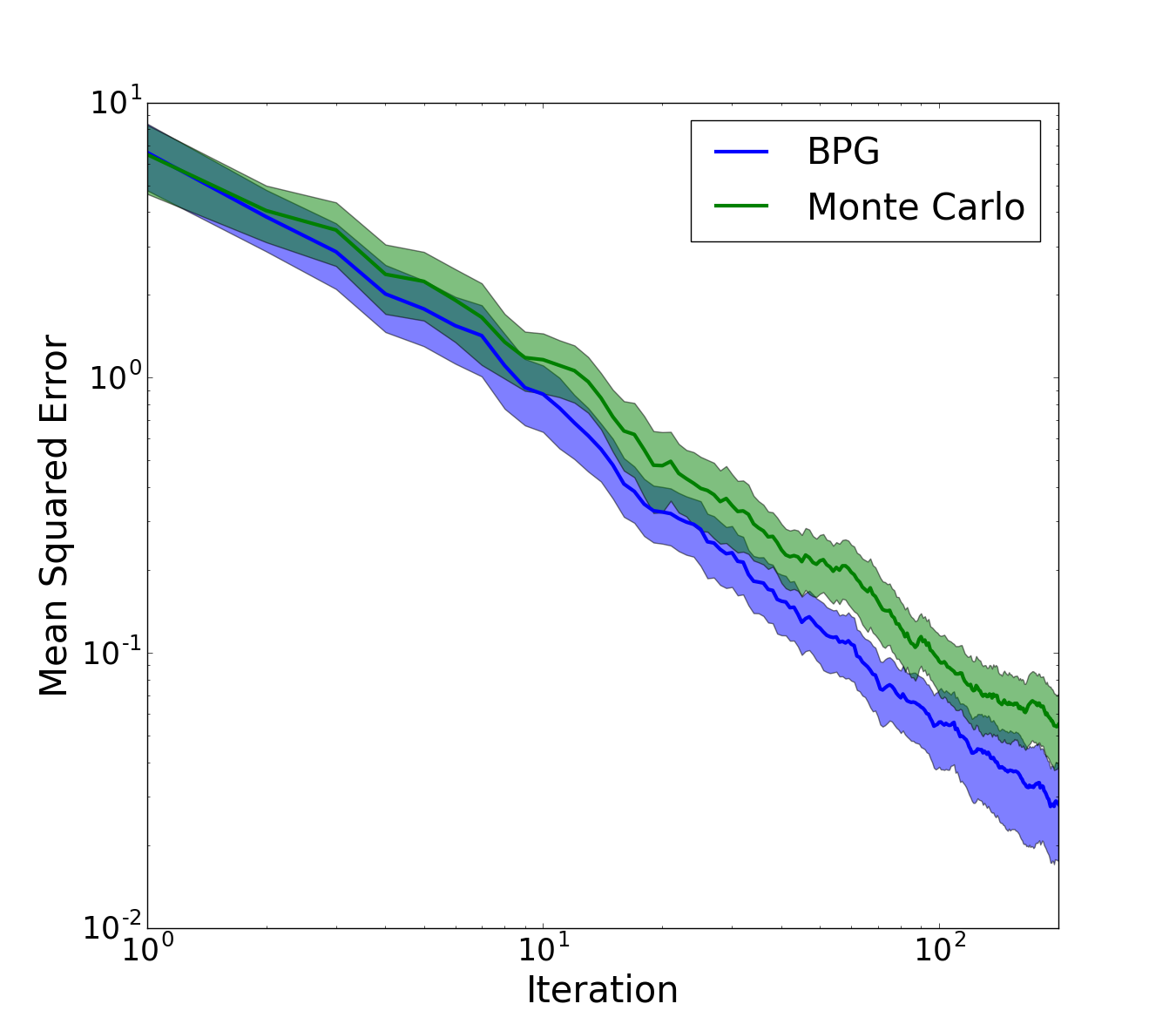}
\caption{Acrobot MSE}
\end{subfigure}
\caption{Mean squared error reduction on the Cart-pole Swing Up and Acrobot domains. 
We adapt the behavior policy for 200 iterations and average results over $100$ trials. Error bars are for 95\% confidence intervals.}
\label{fig:swing}
\vspace{-10pt}
\end{figure}


\subsection{Control Variate Extensions}

In this section, we evaluate the combination of model-based control variates with behavior policy search.
Specifically, we compare the AS estimator with Doubly Robust BPG (DR-BPG).
In these experiments we use a 10x10 stochastic gridworld. 
The added stochasticity increases the difficulty of building an accurate model from trajectories.

Since these methods require a model we construct this model in one of two ways.
The first method uses all trajectories in $\mathcal{D}$ to build the model and then uses the same set to estimate $\rho(\pi_e)$ with ASE or DR.
The second method uses trajectories from the first $10$ iterations to build the model and then fixes the model for the remaining iterations. 
For DR-BPG, behavior policy search starts at iteration $10$ under this second condition.
We call the first method ``update" and the second method ``fixed."
The update method invalidates the theoretical guarantees of these methods but learns a more accurate model.
In both instances, we build maximum likelihood tabular models.

\begin{figure}
\begin{subfigure}{0.45\columnwidth}
\includegraphics[scale=0.12]{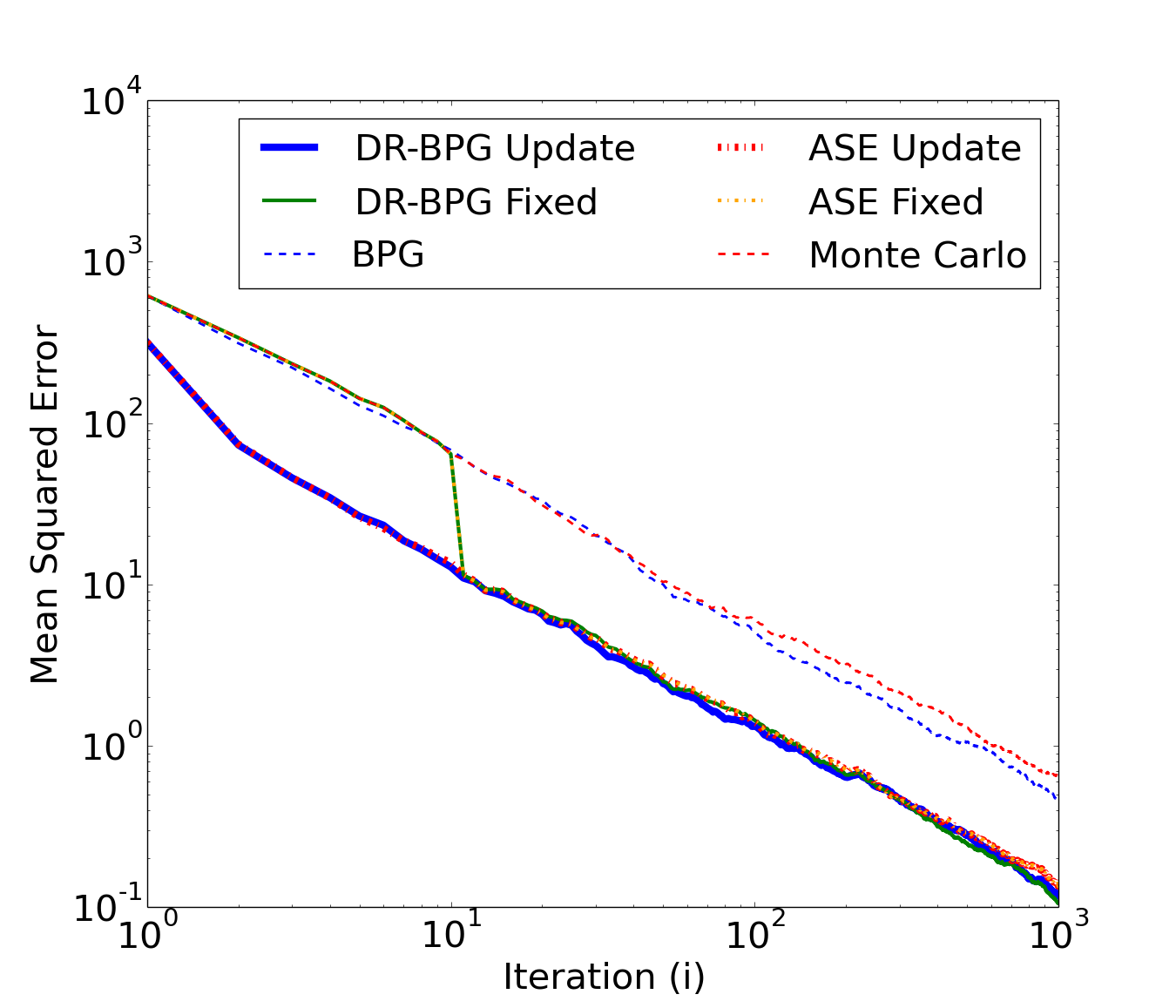}
\caption{Control Variate MSE}
\label{fig:cv}
\end{subfigure}
\begin{subfigure}{0.45\columnwidth}
\includegraphics[scale=0.12]{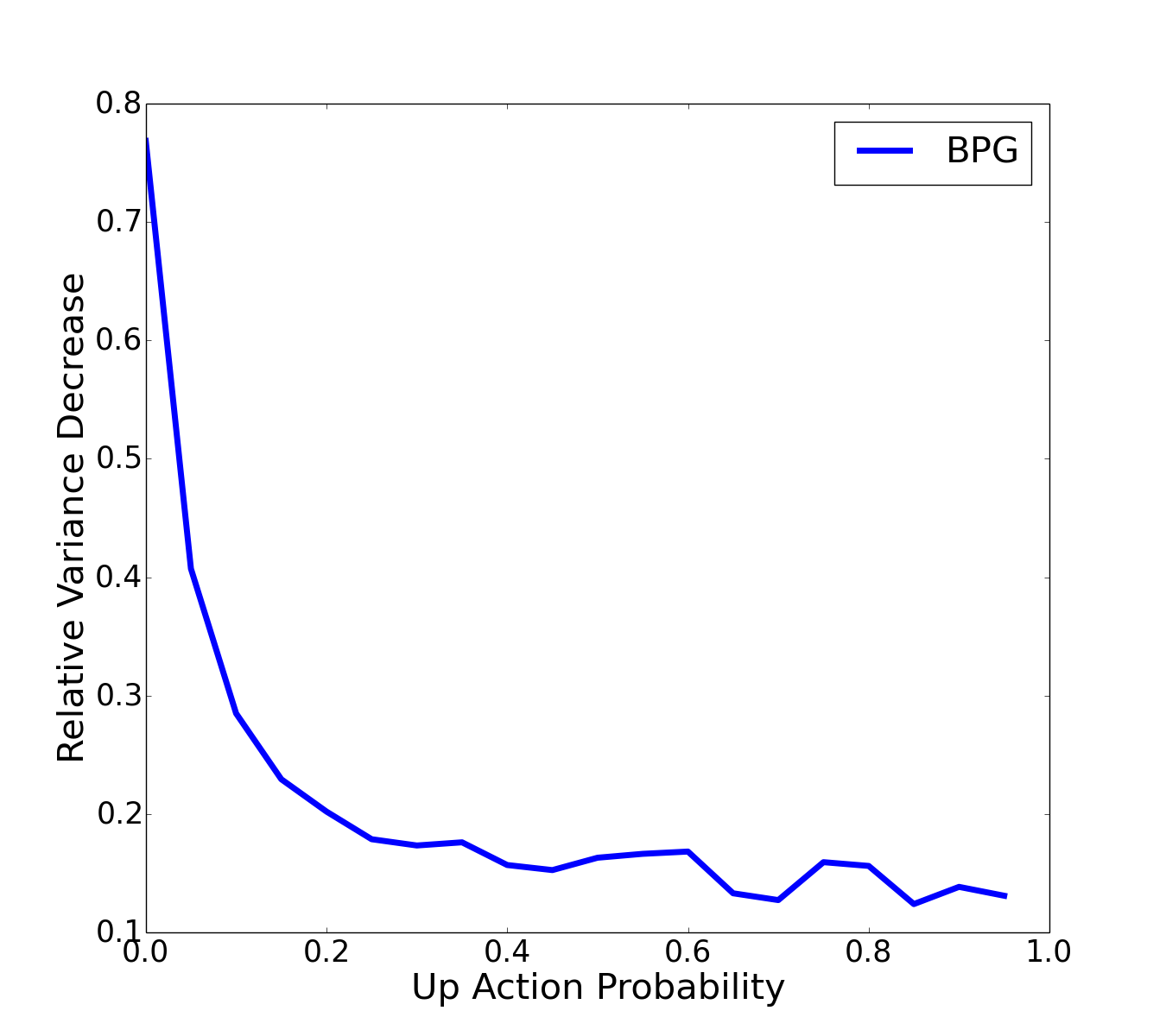}
\caption{Rare Event Improvement}
\label{fig:re}
\end{subfigure}
\caption{\textbf{\ref{fig:cv}:} Comparison of DR and ASE on a larger stochastic Gridworld. For the fixed model methods, the significant drop in MSE at iteration 10 is due to the introduction of the model control variate. For clarity we do not show error bars. The difference between the final estimate of DR-BPG and ASE with the fixed model is statistically significant ($p<0.05$); the difference between the same methods with a constantly improving model is not. \linebreak
\textbf{\ref{fig:re}:} Varying the probability of a high rewarding terminal action in the GridWorld domain. Each point on the horizontal axis is the probability of taking this action. The vertical axis gives the relative decrease in variance after adapting $\btheta$ for 500 iterations. Denoting the initial variance as $v_i$ and the final variance as $v_f$, the relative decrease is computed as $\frac{v_i - v_f}{v_i}$. Error bars for 95\% confidence intervals are given but are small. }
\label{fig:sgw-mb}
\vspace{-15pt}
\end{figure}

Figure \ref{fig:sgw-mb} demonstrates that combining BPG with a model-based control variate (DR-BPG) can lead to further reduction of MSE compared to the control variate alone (ASE).
Specifically, with the fixed model, DR-BPG outperformed all other methods.
DR-BPG using the update method for building the model performed competitively with ASE although not statistically significantly better.
We also evaluate the final learned behavior policy of the fixed model variant of DR-BPG.
For a batch size of $100$ trajectories, the DR estimator with this behavior policy improves upon the ASE estimator with the same model by 56.9 \%.

For DR-BPG, estimating the model with all data still allowed steady progress towards lower variance.
This result is interesting since a changing model changes the surface of our variance objective and thus gradient descent on the variance has no theoretical guarantees of convergence.
Empirically, we observe that setting the learning rate for DR-BPG was more challenging for either model type.
Thus while we have shown BPG can be combined with control variates, more work is needed to produce a robust method.

\subsection{Rareness of Event}

Our final experiment aims to understand how the gap between on- and off-policy variance is affected by the probability of rare events.
The intuition for why behavior policy search can lower the variance of on-policy estimates is that a well selected behavior policy can cause rare and high magnitude events to occur.
We test this intuition by varying the probability of a rare, high magnitude event and observing how this change affects the performance gap between on- and off-policy evaluation.
For this experiment, we use a variant of the deterministic Gridworld where taking the UP action in the initial state (the upper left corner) causes a transition to the terminal state with a reward of $+50$.
We use $\pi_1$ from our earlier Gridworld experiments but we vary the probability of choosing UP when in the initial state.
So with probability $p$ the agent will receive a large reward and end the trajectory.
We use a constant learning rate of $10^{-5}$ for all values of $p$ and run BPG for $500$ iterations.
We plot the relative decrease of the variance as a function of $p$ over 100 trials for each value of $p$.
We use relative variance to normalize across problem instances.
Note that under this measure, even when $p$ is close to $1$, the relative variance is not equal to zero because as $p$ approaches $1$ the initial variance also goes to zero.


This experiment illustrates that as the initial variance increases, the amount of improvement BPG can achieve increases.
As $p$ becomes closer to $1$, the initial variance becomes closer to zero and BPG barely improves over the variance of Monte Carlo (in terms of absolute variance there is no improvement).
When the $\pi_e$ rarely takes the high rewarding UP action ($p$ close to $0$), BPG improves policy evaluation by increasing the probability of this action.
This experiment supports our intuition for why off-policy evaluation can outperform on-policy evaluation.

\section{Related Work}

Behavior policy search and BPG are closely related to existing work on adaptive importance-sampling.
While adaptive importance-sampling has been studied in the estimation literature, we focus here on adaptive importance-sampling for MDPs and Markov Reward Processes (i.e., an MDP with a fixed policy).
Existing work on adaptive IS in RL has considered changing the transition probabilities to lower the variance of policy evaluation \cite{desai2001simulation,frank2008reinforcement} or lower the variance of policy gradient estimates \cite{ciosek2017offer}.
Since the transition probabilities are typically unknown in RL, adapting the behavior policy is a more general approach to adaptive IS.
Ciosek and Whiteson also adapt the distribution of trajectories with gradient descent on the variance \cite{ciosek2017offer} with respect to parameters of the transition probabilities.
The main focus of this work is increasing the probability of simulated rare events so that policy improvement can learn an appropriate response.
In contrast, we address the problem of policy evaluation and differentiate with respect to the (known) policy parameters.

The cross-entropy method (CEM) is a general method for adaptive importance-sampling.
CEM attempts to minimize the Kullback-Leibler divergence between the current sampling distribution and the optimal sampling distribution.
As discussed in Section \ref{sec:opt}, this optimal behavior policy only exists under a set of restrictive conditions.
In contrast we adapt the behavior policy by minimizing variance.

Other methods exist for lowering the variance of on-policy estimates.
In addition to the control variate technique used by the Advantage Sum estimator \cite{Zinkevich2006,White2009}, Veness et al. consider using common random numbers and antithetic variates to reduce the variance of roll-outs in Monte Carlo Tree Search (MCTS) \yrcite{Veness2011}. 
These techniques require a model of the environment (as is typical for MCTS) and do not appear to be applicable to the general RL policy evaluation problem.
BPG could potentially be applied to find a lower variance roll-out policy for MCTS.

In this work we have focused on unbiased policy evaluation. 
When the goal is to minimize MSE it is often permissible to use biased methods such as temporal difference learning \cite{van2014true}, model-based policy evaluation \cite{kearns2002near,strehl2009reinforcement}, or variants of weighted importance sampling \cite{precup2000eligibility}. 
It may be possible to use similar ideas to BPG to reduce bias and variance although this appears to be difficult since the bias contribution to the mean squared error is squared and thus any gradient involving bias requires knowledge of the estimator's bias.
We leave behavior policy search with biased off-policy methods to future work.



\section{Discussion and Future Work}


Our experiments demonstrate that behavior policy search with BPG can lower the variance of policy evaluation.
One open question is characterizing the settings where adapting the behavior policy substantially improves over on-policy estimates.
Towards answering this question, our Gridworld experiment showed that when $\pi_e$ has little variance, BPG can only offer marginal improvement.
BPG increases the probability of observing rare events with a high magnitude.
If the evaluation policy never sees such events then there is little benefit to using BPG.
However, in expectation and with an appropriately selected step-size, BPG will never lower the data-efficiency of policy evaluation.

It is also necessary that the evaluation policy contributes to the variance of the returns. 
If all variance is due to the environment then it seems unlikely that BPG will offer much improvement.
For example, Ciosek and Whiteson \yrcite{ciosek2017offer} consider a variant of the Mountain Car task where the dynamics can trigger a rare event --- independent of the action --- in which rewards are multiplied by $1000$. 
No behavior policy adaptation can lower the variance due to this event.

One limitation of gradient-based BPS methods is the necessity of good step-size selection.
In theory, BPG can never lead to worse policy evaluation compared to on-policy estimates.
In practice, a poorly selected step-size may cause a step to a worse behavior policy at step $i$ which may increase the variance of the gradient estimate at step $i+1$.
Future work could consider methods for adaptive step-sizes, second order methods, or natural behavior policy gradients.

One interesting direction for future work is incorporating behavior policy search into policy improvement.
A similar idea was explored by Ciosek and Whiteson who explored off-environment learning to improve the performance of policy gradient methods \yrcite{ciosek2017offer}.
The method presented in that work is limited to simulated environments with differential dynamics.
Adapting the behavior policy is a potentially much more general approach.

\section{Conclusion}

We have introduced the behavior policy search problem in order to improve estimation of $\rho(\pi_e)$ for an evaluation policy $\pi_e$.
We present a solution --- Behavior Policy Gradient --- for this problem which adapts the behavior policy with stochastic gradient descent on the variance of the importance-sampling estimator.
Experiments demonstrate BPG lowers the mean squared error of estimates of $\rho(\pi_e)$ compared to on-policy estimates.
We also demonstrate BPG can further decrease the MSE of estimates in conjunction with a model-based control variate method.

{\small
\section{Acknowledgements}

We thank Daniel Brown and the anonymous reviewers for useful comments on the work and its presentation.
This work has taken place in the Personal Autonomous Robotics Lab (PeARL) and Learning Agents Research Group (LARG) at the Artificial Intelligence Laboratory, The University of Texas at Austin. PeARL research is supported in part by NSF (IIS-1638107, IIS-1617639). LARG research is supported in part by NSF (CNS-1330072, CNS-1305287, IIS-1637736, IIS-1651089), ONR (21C184-01), AFOSR (FA9550-14-1-0087), Raytheon, Toyota, AT\&T, and Lockheed Martin. Josiah Hanna is supported by an NSF Graduate Research Fellowship. Peter Stone serves on the Board of Directors of Cogitai, Inc.  The terms of this arrangement have been reviewed and approved by the University of Texas at Austin in accordance with its policy on objectivity in research.
}

\onecolumn

\begin{appendix}
\section{Proof of Theorem 1}
In Appendix A, we give the full derivation of our primary theoretical contribution --- the importance-sampling (IS) variance gradient.
We also present the variance gradient for the doubly-robust (DR) estimator.

We first derive an analytic expression for the gradient of the variance of an arbitrary, unbiased off-policy policy evaluation estimator, $\operatorname{OPE}(H,\btheta)$.
Importance-sampling is one such off-policy policy evaluation estimator. From our general derivation we derive the gradient of the variance of the IS estimator and then extend to the DR estimator.

\subsection{Variance Gradient of an Unbiased Off-Policy Policy Evaluation Method}

We first present a lemma from which $\frac{\partial}{\partial\btheta}\operatorname{MSE}[IS(H,\btheta)]$ and $\frac{\partial}{\partial\btheta}\operatorname{MSE}[DR(H,\btheta)]$ can both be derived.

Lemma 1 gives the gradient of the mean squared error (MSE) of an unbiased off-policy policy evaluation method.

\begin{lemma}
{
\[\frac{\partial}{\partial\btheta}\operatorname{MSE}[\operatorname{OPE}(H,\btheta)]=  \\ \mathbf{E} \left [\operatorname{OPE}(H,\btheta)^2 (\sum_{t=0}^L \frac{\partial}{\partial\btheta} \log \pi_\btheta(A_t | S_t)) + \frac{\partial}{\partial \btheta} \operatorname{OPE}(H,\btheta)^2 \middle | H \sim \pi_\btheta \right ] 
\]
}
\end{lemma}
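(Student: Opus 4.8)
The plan is to first exploit unbiasedness to replace the mean squared error with a second moment, and then differentiate that second moment using the likelihood-ratio trick applied to the trajectory distribution.

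First I would note that because $\operatorname{OPE}$ is unbiased, $\mathbf{E}[\operatorname{OPE}(H,\btheta)\mid H\sim\pi_\btheta]=\rho(\pi_e)$ for every $\btheta$, so the bias is identically zero and the MSE coincides with the variance:
\[
\operatorname{MSE}[\operatorname{OPE}(H,\btheta)]=\operatorname{Var}[\operatorname{OPE}(H,\btheta)]=\mathbf{E}\left[\operatorname{OPE}(H,\btheta)^2\mid H\sim\pi_\btheta\right]-\rho(\pi_e)^2.
\]
Since $\rho(\pi_e)$ depends only on the evaluation policy and not on the behavior parameters $\btheta$, the term $\rho(\pi_e)^2$ is constant in $\btheta$ and vanishes under differentiation. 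Hence it suffices to compute $\frac{\partial}{\partial\btheta}\mathbf{E}[\operatorname{OPE}(H,\btheta)^2\mid H\sim\pi_\btheta]$. This reduction is precisely what produces the clean $\operatorname{OPE}(H,\btheta)^2$ factor in the first term of the lemma, rather than a centered $(\operatorname{OPE}(H,\btheta)-\rho(\pi_e))^2$ factor.

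Next I would expand the expectation as a sum over trajectories, $\mathbf{E}[\operatorname{OPE}(H,\btheta)^2\mid H\sim\pi_\btheta]=\sum_H \Pr(H\mid\btheta)\operatorname{OPE}(H,\btheta)^2$, where $\Pr(H\mid\btheta)$ factors into the initial-state distribution, the transition probabilities, and the policy term $w_{\pi_\btheta}(H)=\prod_{t=0}^L \pi_\btheta(A_t\mid S_t)$. Because $\mathcal{S}$, $\mathcal{A}$, and $R$ are finite and $L$ is finite, this sum is finite and I may interchange $\frac{\partial}{\partial\btheta}$ with the summation without further regularity concerns. Applying the product rule to each summand splits the derivative into a piece acting on the sampling probability $\Pr(H\mid\btheta)$ and a piece acting on the integrand $\operatorname{OPE}(H,\btheta)^2$. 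The second piece reassembles immediately into $\mathbf{E}[\frac{\partial}{\partial\btheta}\operatorname{OPE}(H,\btheta)^2\mid H\sim\pi_\btheta]$, which is the second term of the claim.

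For the sampling-probability piece I would use the log-derivative identity $\frac{\partial}{\partial\btheta}\Pr(H\mid\btheta)=\Pr(H\mid\btheta)\frac{\partial}{\partial\btheta}\log\Pr(H\mid\btheta)$. The only $\btheta$-dependent factor of $\Pr(H\mid\btheta)$ is $w_{\pi_\btheta}(H)$; the initial-state distribution and the (unknown) transition probabilities do not depend on $\btheta$, so $\frac{\partial}{\partial\btheta}\log\Pr(H\mid\btheta)=\sum_{t=0}^L \frac{\partial}{\partial\btheta}\log\pi_\btheta(A_t\mid S_t)$. Collecting this piece back into an expectation over $H\sim\pi_\btheta$ then yields exactly the first summand in the statement. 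The work here is essentially bookkeeping of which factors carry $\btheta$-dependence; the one genuine subtlety, and the main obstacle, is justifying the interchange of differentiation and summation, which is immediate in the finite setting assumed here but in the continuous-state or infinite-horizon case would require a dominated-convergence argument. Finally, I would remark that Theorem 1 and the Corollary both follow by substituting $\operatorname{OPE}=\operatorname{IS}$ and $\operatorname{OPE}=\operatorname{DR}$ respectively and evaluating the residual term $\frac{\partial}{\partial\btheta}\operatorname{OPE}(H,\btheta)^2$; for IS this residual equals $-2\operatorname{IS}(H,\btheta)^2\sum_{t=0}^L\frac{\partial}{\partial\btheta}\log\pi_\btheta(A_t\mid S_t)$, which combines with the first term to collapse the coefficient to $-1$ and recover the single-term expression of Theorem 1.
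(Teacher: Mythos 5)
Your proof is correct and follows essentially the same route as the paper's: reduce MSE to the uncentered second moment via unbiasedness (the $\rho(\pi_e)^2$ term dropping out under differentiation), expand the expectation as a sum over trajectories, apply the product rule, and use the likelihood-ratio/log-derivative identity on the $\btheta$-dependent policy factors of $\Pr(H\mid\btheta)$ --- which is exactly the paper's decomposition $\Pr(H\mid\btheta)=p(H)w_{\pi_\btheta}(H)$ followed by the multi-factor product rule. Your closing remark on how Theorem 1 follows (the residual $\frac{\partial}{\partial\btheta}\operatorname{IS}(H,\btheta)^2=-2\operatorname{IS}(H,\btheta)^2\sum_{t=0}^L\frac{\partial}{\partial\btheta}\log\pi_\btheta(A_t\mid S_t)$ collapsing the coefficient to $-1$) also matches the paper's derivation.
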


\begin{proof}

We begin by decomposing $\Pr(H|\pi)$ into two components---one that depends on $\pi$ and the other that does not. Let 
$$
w_\pi(H)\coloneqq \prod_{t=0}^{L} \pi(A_t|S_t),
$$
and
$$
p(H)\coloneqq \Pr(H|\pi) / w_\pi(H),
$$
for any $\pi$ such that $H \in \operatorname{supp}(\pi)$ (any such $\pi$ will result in the same value of $p(H)$). These two definitions mean that $\Pr(H|\pi)=p(H)w_\pi(H)$.

The MSE of the OPE estimator is given by:
\begin{align}
    \operatorname{MSE}[\operatorname{OPE}(H,\btheta)] = \operatorname{Var}[\operatorname{OPE}(H,\btheta)] + \underbrace{\left (\mathbf{E}[\operatorname{OPE}(H,\btheta)]-\rho(\pi_e) \right )^2}_{\text{bias}^2}.
\end{align}
Since the OPE estimator is unbiased, i.e., $\mathbf{E}[\operatorname{OPE}(H,\btheta)]=\rho(\pi_e)$, the second term is zero and so:
\begin{align}
    \operatorname{MSE}(\operatorname{OPE}(H,\btheta))=&\operatorname{Var}(\operatorname{OPE}(H,\btheta))\\
    =&\mathbf{E}\left [ \operatorname{OPE}(H,\btheta)^2 \middle | H \sim \pi_{\btheta} \right ] - \mathbf{E}[\operatorname{OPE}(H,\btheta) | H \sim \pi_{\btheta}  ]^2\\
    =& \mathbf{E}\left [ \operatorname{OPE}(H,\btheta)^2  \middle | H \sim \pi_{\btheta}  \right ] -\rho(\pi_e)^2 \label{eq:1}
    \end{align}
    %
    %

To obtain the $\operatorname{MSE}$ gradient, we differentiate $\operatorname{MSE}(\operatorname{OPE}(H,\btheta))$ with respect to $\btheta$:
\begin{align}
\frac{\partial}{\partial\btheta}\operatorname{MSE}[\operatorname{OPE}(H,\btheta)]=& \frac{\partial}{\partial\btheta} \left [ \mathbf{E} \left [\operatorname{OPE}(H,\btheta)^2 \middle | H \sim \pi_\btheta \right ] - \rho(\pi_e)^2 \right ] \\
=& \frac{\partial}{\partial\btheta}  \mathbf{E}_{H \sim \pi_\btheta} \left [\operatorname{OPE}(H,\btheta)^2 \right ] \\
=&\frac{\partial}{\partial\btheta} \sum_H \Pr(H|\btheta) \operatorname{OPE}(H,\btheta)^2 \\
=& \sum_H\Pr(H|\btheta) \frac{\partial}{\partial\btheta}  \operatorname{OPE}(H,\btheta)^2 + \operatorname{OPE}(H,\btheta)^2 \frac{\partial}{\partial\btheta} \Pr(H|\btheta)\\
=& \sum_H\Pr(H|\btheta) \frac{\partial}{\partial\btheta}  \operatorname{OPE}(H,\btheta)^2 + \operatorname{OPE}(H,\btheta)^2 p(H) \frac{\partial}{\partial\btheta} w_{\pi_\btheta}(H) \label{eq:lkajslkj}
\end{align}

    %
    %
    %
    %
Consider the last factor of the last term in more detail:
\begin{align}
    \frac{\partial}{\partial \btheta} w_{\pi_\btheta}(H) =& \frac{\partial}{\partial \btheta} \prod_{t=0}^{L} 
    \pi_\btheta(A_t|S_t)\\
    \overset{\text{(a)}}{=}& \left (\prod_{t=0}^{L} 
    \pi_\btheta(A_t|S_t) \right )\left ( \sum_{t=0}^{L} \frac{\frac{\partial}{\partial \btheta} \pi_{\btheta}(A_t|S_t)}{\pi_\btheta(A_t|S_t)}\right )\\
    =& w_{\pi_\btheta}(H) \sum_{t=0}^{L} \frac{\partial}{\partial \btheta} \log \left ( \pi_{\btheta}(A_t|S_t) \right ),
\end{align}
where {\bf (a)} comes from the multi-factor product rule. Continuing from \eqref{eq:lkajslkj} we have that:
\begin{align}
    \frac{\partial}{\partial \btheta} \operatorname{MSE}(\operatorname{OPE}(H,\btheta))
    =& \mathbf{E} \left [ \operatorname{OPE}(H,\btheta)^2 \sum_{t=0}^{L} \frac{\partial}{\partial \btheta} \log \left ( \pi_{\btheta}(A_t|S_t) \right ) + \frac{\partial}{\partial \btheta} \operatorname{OPE}(H,\btheta)^2\middle | H \sim \pi_{\btheta} \right ].
\end{align}


\end{proof}

\subsection{Behavior Policy Gradient Theorem}

We now use Lemma 1 to prove the Behavior Policy Gradient Theorem which is our main theoretical contribution.

\begin{thm}
\[ \frac{\partial}{\partial \btheta} \operatorname{MSE}[\operatorname{IS}(H,\btheta)] = \mathbf{E} \left [ -\operatorname{IS}(H,\btheta)^2 \sum_{t=0}^L \frac{\partial}{\partial\btheta}\log \pi_\btheta(A_t|S_t)\middle | H \sim \pi_\btheta \right]\]
where the expectation is taken over $H \sim \pi_\btheta$.
\end{thm}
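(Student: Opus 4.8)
The plan is to specialize Lemma 1 to the estimator $\operatorname{OPE}=\operatorname{IS}$ and then evaluate the single remaining nontrivial piece, namely $\frac{\partial}{\partial\btheta}\operatorname{IS}(H,\btheta)^2$. Lemma 1 already supplies the general template
\[
\frac{\partial}{\partial\btheta}\operatorname{MSE}[\operatorname{IS}(H,\btheta)] = \mathbf{E}\left[\operatorname{IS}(H,\btheta)^2 \sum_{t=0}^L \frac{\partial}{\partial\btheta}\log\pi_\btheta(A_t|S_t) + \frac{\partial}{\partial\btheta}\operatorname{IS}(H,\btheta)^2 \,\middle|\, H \sim \pi_\btheta\right],
\]
so the whole task reduces to showing that the second term inside the expectation equals exactly $-2$ times the first term, so that the two collapse into a single term carrying a negative sign.

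First I would write $\operatorname{IS}(H,\btheta) = g(H)\,w_{\pi_e}(H)/w_{\pi_\btheta}(H)$ and observe that neither $g(H)$ nor $w_{\pi_e}(H)$ depends on $\btheta$; only the behavior-policy weight $w_{\pi_\btheta}(H)$ does. Hence $\operatorname{IS}(H,\btheta)^2 = g(H)^2\, w_{\pi_e}(H)^2\, w_{\pi_\btheta}(H)^{-2}$, and differentiating this square requires only the derivative of $w_{\pi_\btheta}(H)^{-2}$, which keeps the computation entirely mechanical.

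Next I would apply the chain rule together with the identity $\frac{\partial}{\partial\btheta} w_{\pi_\btheta}(H) = w_{\pi_\btheta}(H)\sum_{t=0}^L \frac{\partial}{\partial\btheta}\log\pi_\btheta(A_t|S_t)$, which was already established inside the proof of Lemma 1 via the multi-factor product rule, so I may simply invoke it. This yields
\[
\frac{\partial}{\partial\btheta} w_{\pi_\btheta}(H)^{-2} = -2\, w_{\pi_\btheta}(H)^{-2}\sum_{t=0}^L \frac{\partial}{\partial\btheta}\log\pi_\btheta(A_t|S_t),
\]
and therefore $\frac{\partial}{\partial\btheta}\operatorname{IS}(H,\btheta)^2 = -2\,\operatorname{IS}(H,\btheta)^2 \sum_{t=0}^L \frac{\partial}{\partial\btheta}\log\pi_\btheta(A_t|S_t)$, confirming the anticipated $-2$ factor.

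Finally I would substitute this expression back into the Lemma 1 template: the $+\operatorname{IS}(H,\btheta)^2\sum_{t}(\cdot)$ term and the $-2\operatorname{IS}(H,\btheta)^2\sum_{t}(\cdot)$ term combine to $-\operatorname{IS}(H,\btheta)^2\sum_{t}(\cdot)$, which is exactly the claimed gradient once the expectation over $H\sim\pi_\btheta$ is reinstated. The only place where any care is required is the bookkeeping of the $\log$-derivative identity and the power-rule factor of $-2$; there is no genuine analytic obstacle, since the interchange of differentiation with the finite trajectory sum is justified by the finiteness assumptions on $\mathcal{S},\mathcal{A},R,$ and $L$ that were already used to derive Lemma 1.
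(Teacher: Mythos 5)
Your proposal is correct and follows essentially the same route as the paper's own proof: invoke Lemma 1, compute $\frac{\partial}{\partial\btheta}\operatorname{IS}(H,\btheta)^2 = -2\,\operatorname{IS}(H,\btheta)^2\sum_{t=0}^L \frac{\partial}{\partial\btheta}\log\pi_\btheta(A_t|S_t)$ via the log-derivative identity for $w_{\pi_\btheta}$, and substitute to collapse the two terms into one. The only cosmetic difference is that you differentiate $w_{\pi_\btheta}(H)^{-2}$ directly by the power rule, whereas the paper applies the product rule to the square of the ratio; the underlying computation is identical.
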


\begin{proof}

We first derive $\frac{\partial}{\partial\btheta} \operatorname{IS}(H,\btheta)^2$. Theorem 1 then follows directly from using $\frac{\partial}{\partial\btheta} \operatorname{IS}(H,\btheta)^2$ as $\frac{\partial}{\partial\btheta} \operatorname{OPE}(H,\btheta)^2$ in Lemma 1.

\begin{align*}
\operatorname{IS}(H,\btheta)^2 =& \left (\frac{w_{\pi_e}}{w_\btheta}g(H) \right )^2 \\
 \frac{\partial}{\partial \btheta}\operatorname{IS}(H,\btheta)^2 =& \frac{\partial}{\partial \btheta} \left (\frac{w_{\pi_e}(H)}{w_\btheta(H)}g(H) \right )^2 \\
=& 2 \cdot g(H) \frac{w_{\pi_e}(H)}{w_\btheta(H)} \frac{\partial}{\partial \btheta} \left (g(H) \frac{w_{\pi_e}(H)}{w_\btheta(H)}\right ) \\
\overset{\text{(a)}}{=}& - 2 \cdot g(H) \frac{w_{\pi_e}(H)}{w_\btheta(H)} \left (g(H)\frac{w_{\pi_e}(H)}{w_\btheta(H)} \right )\sum_{t=0}^L \frac{\partial}{\partial \btheta} \log \pi_\btheta(A_t|S_t) \\
=& - 2 \operatorname{IS}(H,\btheta)^2 \sum_{t=0}^L \frac{\partial}{\partial \btheta} \log \pi_\btheta(A_t|S_t)
\end{align*}

where {\bf (a)} comes from the multi-factor product rule and using the likelihood-ratio trick (i.e., $\frac{\frac{\partial}{\partial\btheta}\pi_\btheta(A|S)}{\pi_\btheta(A|S)} = \log \pi_\btheta(A|S)$)

Substituting this expression into Lemma 1 completes the proof:

\[ \frac{\partial}{\partial \btheta} \operatorname{MSE}[\operatorname{IS}(H,\btheta)] = \mathbf{E} \left [ -\operatorname{IS}(H,\btheta)^2 \sum_{t=0}^L \frac{\partial}{\partial\btheta}\log \pi_\btheta(A_t|S_t) \middle | H \sim \pi_\btheta \right]\]

\end{proof}

\subsection{Doubly Robust Estimator}

Our final theoretical result is a corollary to the Behavior Policy Gradient Theorem: an extension of the IS variance gradient to the Doubly Robust (DR) estimator.
Recall that for a single trajectory DR is given as:
\[ 
\operatorname{DR}(H,\btheta) \coloneqq \hat{v}^{\pi_e}(S_0) + \sum_{t=0}^L \gamma^t \frac{w_{\pi_e,t}}{w_{\btheta,t}} \left (R_t - \hat{q}^{\pi_e}(S_t,A_t) + \hat{v}^{\pi_e}(S_{t+1})\right)
\]

where $\hat{v}^{\pi_e}$ is the state-value function of $\pi_e$ under an approximate model, $\hat{q}^{\pi_e}$ is the action-value function of $\pi_e$ under the model, and $w_{\pi,t}\coloneqq \prod_{j=0}^t \pi(A_j|S_j)$.

The gradient of the mean squared error of the DR estimator is given by the following corollary to the Behavior Policy Gradient Theorem:

\begin{cor}
{
\small
\begin{multline*}
\frac{\partial}{\partial\btheta}\operatorname{MSE}\left[DR(H,\btheta)\right] = \mathbf{E} [( \operatorname{DR}(H,\btheta)^2 \sum_{t=0}^L \frac{\partial}{\partial\btheta}\log \pi_\btheta(A_t|S_t)   -   2 \operatorname{DR}(H,\btheta)(\sum_{t=0}^L \gamma^t \delta_t\frac{w_{\pi_e,t}}{w_{\btheta,t}} \sum_{i=0}^t \frac{\partial}{\partial\btheta} \log \pi_\btheta (A_i|S_i))  ]
\end{multline*}
}
where $\delta_t = R_t - \hat{q}(S_t,A_t) + \hat{v}(S_{t+1})$ and the expectation is taken over $H \sim \pi_\btheta$.
\end{cor}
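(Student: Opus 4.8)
The plan is to invoke Lemma 1 with $\operatorname{OPE}(H,\btheta) = \operatorname{DR}(H,\btheta)$, exactly as Theorem 1 was obtained by taking $\operatorname{OPE} = \operatorname{IS}$. Lemma 1 reduces the problem to a single computation, the parameter gradient of the squared estimator $\frac{\partial}{\partial\btheta}\operatorname{DR}(H,\btheta)^2$; every other piece of the corollary is then a direct substitution. So the entire task is to apply the chain rule $\frac{\partial}{\partial\btheta}\operatorname{DR}(H,\btheta)^2 = 2\operatorname{DR}(H,\btheta)\frac{\partial}{\partial\btheta}\operatorname{DR}(H,\btheta)$ and to simplify the inner gradient $\frac{\partial}{\partial\btheta}\operatorname{DR}(H,\btheta)$.

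First I would identify which factors in $\operatorname{DR}(H,\btheta)$ actually depend on $\btheta$. Writing $\delta_t = R_t - \hat{q}^{\pi_e}(S_t,A_t) + \hat{v}^{\pi_e}(S_{t+1})$, the estimator is $\hat{v}^{\pi_e}(S_0) + \sum_{t=0}^L \gamma^t \frac{w_{\pi_e,t}}{w_{\btheta,t}}\delta_t$. Because the control-variate value functions $\hat{q}^{\pi_e}$ and $\hat{v}^{\pi_e}$ are defined with respect to the fixed evaluation policy and the fixed model, and because the rewards $R_t$ and the numerator weights $w_{\pi_e,t}$ carry no $\btheta$-dependence, the only $\btheta$-dependence enters through the denominators $1/w_{\btheta,t}$. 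The leading term $\hat{v}^{\pi_e}(S_0)$ is constant in $\btheta$ and drops out of the gradient.

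Next I would differentiate each denominator with the likelihood-ratio trick used for Theorem 1. Since $w_{\btheta,t} = \prod_{j=0}^t \pi_\btheta(A_j|S_j)$, we have $\frac{\partial}{\partial\btheta}\frac{1}{w_{\btheta,t}} = -\frac{1}{w_{\btheta,t}}\sum_{i=0}^t \frac{\partial}{\partial\btheta}\log\pi_\btheta(A_i|S_i)$, so that $\frac{\partial}{\partial\btheta}\operatorname{DR}(H,\btheta) = -\sum_{t=0}^L \gamma^t \delta_t \frac{w_{\pi_e,t}}{w_{\btheta,t}}\sum_{i=0}^t \frac{\partial}{\partial\btheta}\log\pi_\btheta(A_i|S_i)$. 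Multiplying by $2\operatorname{DR}(H,\btheta)$ produces the second (cross) term of the corollary, and adding the $\operatorname{DR}(H,\btheta)^2\sum_{t=0}^L \frac{\partial}{\partial\btheta}\log\pi_\btheta(A_t|S_t)$ contribution supplied by Lemma 1 completes the statement.

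The main obstacle---really the only subtlety---is that, unlike $\operatorname{IS}$, the DR estimator is a \emph{sum} of per-decision terms, each carrying its own prefix importance weight $w_{\pi_e,t}/w_{\btheta,t}$. For IS the derivative collapsed neatly into a scalar multiple of $\operatorname{IS}(H,\btheta)^2$ because there was a single weight $w_{\pi_e}/w_\btheta$; here the gradient of the square does \emph{not} reduce to a multiple of $\operatorname{DR}(H,\btheta)^2$, and I must carefully track the nested summation $\sum_{t=0}^L \gamma^t\delta_t \frac{w_{\pi_e,t}}{w_{\btheta,t}}\sum_{i=0}^t(\cdot)$, keeping the inner score sum truncated at $t$ rather than running to $L$. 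Getting this bookkeeping right---and confirming that no factor other than $\hat{v}^{\pi_e}(S_0)$ is independent of $\btheta$---is the one place the derivation diverges from the IS case; the remainder is a mechanical application of Lemma 1.
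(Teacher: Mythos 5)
Your proposal is correct and follows exactly the paper's own route: invoke Lemma 1 with $\operatorname{OPE} = \operatorname{DR}$, compute $\frac{\partial}{\partial\btheta}\operatorname{DR}(H,\btheta)^2 = 2\operatorname{DR}(H,\btheta)\frac{\partial}{\partial\btheta}\operatorname{DR}(H,\btheta)$ by observing that only the denominators $1/w_{\btheta,t}$ carry $\btheta$-dependence, and apply the likelihood-ratio trick per prefix weight to obtain the cross term with the inner score sum truncated at $t$. Your explicit identification of which factors are $\btheta$-independent and your warning that the gradient does not collapse to a multiple of $\operatorname{DR}(H,\btheta)^2$ (unlike the IS case) are exactly the points the paper's derivation relies on, so there is nothing to correct.
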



\begin{proof}

As with Theorem 1, we first derive $\frac{\partial}{\partial\btheta} \operatorname{DR}(H,\btheta)^2$. 
Corollary 1 then follows directly from using $\frac{\partial}{\partial\btheta} \operatorname{DR}(H,\btheta)^2$ as $\frac{\partial}{\partial\btheta} \operatorname{OPE}(H,\btheta)^2$ in Lemma 1.

\[ \operatorname{DR}(H,\btheta)^2 = \left(\hat{v}^{\pi_e}(S_0) + \sum_{t=0}^L \gamma^t \frac{w_{\pi_e,t}}{w_{\btheta,t}} \left (R_t - \hat{q}^{\pi_e}(S_t,A_t) + \hat{v}^{\pi_e}(S_{t+1})\right)\right)^2\]
 \begin{align*}
 \frac{\partial}{\partial \btheta}\operatorname{DR}(H,\btheta)^2 =& \frac{\partial}{\partial \btheta} \left ( \hat{v}^{\pi_e}(S_0) + \sum_{t=0}^L \gamma^t \frac{w_{\pi_e,t}}{w_{\btheta,t}} \left (R_t - \hat{q}^{\pi_e}(S_t,A_t) + \hat{v}^{\pi_e}(S_{t+1})\right)  \right )^2 \\
  =& 2 \operatorname{DR}(H,\btheta)\frac{\partial}{\partial \btheta} \left (\hat{v}^{\pi_e}(S_0) + \sum_{t=0}^L \gamma^t \frac{w_{\pi_e,t}}{w_{\btheta,t}} (R_t - \hat{q}^{\pi_e}(S_t,A_t) + \hat{v}^{\pi_e}(S_{t+1})) \right )  \\
 =& -2 \operatorname{DR}(H,\btheta)(\sum_{t=0}^L \gamma^t\frac{w_{\pi_e,t}}{w_{\btheta,t}}(R_t - \hat{q}^{\pi_e}(S_t,A_t) + \hat{v}^{\pi_e}(S_{t+1})) \sum_{i=0}^t \frac{\partial}{\partial\btheta} \log \pi_\btheta (A_i|S_i))
\end{align*}

Thus the $\operatorname{DR}(H,\btheta)$ gradient is:
{\small
\[ = \mathbf{E}\left [ \operatorname{DR}(H,\btheta)^2 \sum_{t=0}^L \frac{\partial}{\partial\btheta}\log \pi_\btheta(A_t|S_t) - 2 \operatorname{DR}(H,\btheta)(\sum_{t=0}^L \gamma^t \frac{w_{\pi_e,t}}{w_{\btheta,t}}(R_t - \hat{q}^{\pi_e}(S_t,A_t) + \hat{v}^{\pi_e}(S_{t+1})) \sum_{i=0}^t \frac{\partial}{\partial\btheta} \log \pi_\btheta (A_i|S_i)) \middle | H \sim \pi_\btheta  \right]\]
}

\end{proof}

The expression for the DR behavior policy gradient is more complex than the expression for the IS behavior policy gradient.
Lowering the variance of DR involves accounting for the covariance of the sum of terms.
Intuitively, accounting for the covariance increases the complexity of the expression for the gradient.

\section{BPG's Off-Policy Estimates are Unbiased}

This appendix proves that BPG's estimate is an unbiased estimate of $\rho(\pi_e)$.
If only trajectories from a single $\btheta_i$ were used then clearly $\operatorname{IS}(\cdot,\btheta_i)$ is an unbiased estimate of $\rho(\pi_e)$.
The difficulty is that the BPG's estimate at iteration $n$ depends on all $\btheta_i$ for $i=1\ldots n $ and each $\btheta_i$ is \textit{not} independent of the others.
Nevertheless, we prove here that BPG produces an unbiased estimate of $\rho(\pi_e)$ at each iteration.
Specifically, we will show that $\mathbf{E}\left[\operatorname{IS}(H_n, \btheta_n)\middle | \btheta_0 = \btheta_e)\right ]$ is an unbiased estimate of $\rho(\pi_e)$, 
 where the $\operatorname{IS}$ estimate is conditioned on $\btheta_0 = \btheta_e$.
 To make the dependence of $\btheta_i$ on $\btheta_{i-1}$ explicit, we will write $f(H_{i-1}) \coloneqq \btheta_i$ where $H_{i-1} \sim \pi_{\btheta_{i-1}}$. We use $\Pr(h|\btheta)$ as shorthand for $\Pr(H = h | \btheta)$.

\begin{align*}
\mathbf{E}\left[\operatorname{IS}(H_n,\btheta_n) \middle | \btheta = \btheta_e) \right] =& \sum_{h_0} \Pr(h_0 | \btheta_0) \sum_{h_1} \Pr(h_1 | f(h_0)) \cdots \underbrace{\sum_{h_n} \Pr(h_n | f(h_{n-1})) \operatorname{IS}(h_n)}_{\rho(\pi_e)} \\
=& \rho(\pi_e) \sum_{h_0} \Pr(h_0 | \btheta_0) \sum_{h_1} \Pr(h_1 | f(h_0)) \cdots \\
=& \rho(\pi_e)
\end{align*}

Notice that, even though BPG's off-policy estimates at each iteration are unbiased, they are \textit{not} statistically independent. 
This means that concentration inequalities, like Hoeffding's inequality, cannot be applied directly. 
We conjecture that the conditional independence properties of BPG (specifically that $H_i$ is independent of $H_{i-1}$ given $\theta_i$), are sufficient for Hoeffding's inequality to be applicable.

\section{Supplemental Experiment Description}

This appendix contains experimental details in addition to the details contained in Section 5 of the paper.

\paragraph{Gridworld:}
This domain is a 4x4 Gridworld with a terminal state with reward $10$ at $(3,3)$, a state with reward $-10$ at $(1,1)$, a state with reward $1$ at $(1,3)$, and all other states having reward $-1$.
The action set contains the four cardinal directions and actions move the agent in its intended direction (except when moving into a wall which produces no movement).
The agent begins in (0,0), $\gamma=1$, and $L=100$.
All policies use softmax action selection with temperature $1$ where the probability of taking an action $a$ in a state $s$ is given by:
\[ \pi(a|s) = \frac{e^{\theta_{sa}}}{\sum_{a^\prime} e^{\theta_{sa^\prime}}}\]
We obtain two evaluation policies by applying REINFORCE to this task, starting from a policy that selects actions uniformly at random.
We then select one evaluation policy from the early stages of learning -- an improved policy but still far from converged --, $\pi_1$, and one after learning has converged, $\pi_2$.
We run our set of experiments once with $\pi_e:=\pi_1$ and a second time with $\pi_e:=\pi_2$.
The ground truth value of $\rho(\pi_e)$ is computed with value iteration for both $\pi_e$.

\paragraph{Stochastic Gridworld:}
The layout of this Gridworld is identical to the deterministic Gridworld except the terminal state is at $(9,9)$ and the $+1$ reward state is at $(1,9)$.
When the agent moves, it moves in its intended direction with probability 0.9, otherwise it goes left or right with equal probability. 
Noise in the environment increases the difficulty of building an accurate model from trajectories.

\paragraph{Continuous Control:}
We evaluate BPG on two continuous control tasks: Cart-pole Swing Up and Acrobot.
Both tasks are implemented within RLLAB \cite{duan2016benchmarking} (full details of the tasks are given in Appendix 1.1).
The single task modification we make is that in Cart-pole Swing Up, when a trajectory terminates due to moving out of bounds we give a penalty of $-1000$.
This modification increases the variance of $\pi_e$.
We use $\gamma=1$ and $L=50$.
Policies are represented as conditional Gaussians with mean determined by a neural network with two hidden layers of 32 tanh units each and a state-independent diagonal covariance matrix. 
In Cart-pole Swing Up, $\pi_e$ was learned with 10 iterations of the TRPO algorithm \cite{schulman2015trust} applied to a randomly initialized policy.
In Acrobot, $\pi_e$ was learned with 60 iterations.
The ground truth value of $\rho(\pi_e)$ in both domains is computed with 1,000,000 Monte Carlo roll-outs.

\paragraph{Domain Independent Details}

In all experiments we subtract a constant control variate (or baseline) in the gradient estimate from Theorem 1. 
The baseline is $b_i = \mathbf{E} \left [-\operatorname{IS}(H)^2 \middle | H \sim \btheta_{i-1} \right ]$ and our new gradient estimate is:

\[
\mathbf{E}\left[ (-\operatorname{IS}^2 - b_i) \sum_{t=0}^L \frac{\partial}{\partial\btheta}\log \pi_\btheta(A_t|S_t) \middle | H \sim \pi_\btheta \right]
\]

Adding or subtracting a constant does not change the gradient in expectation since $b_i \cdot \mathbf{E}\left[\sum_{t=0}^L \frac{\partial}{\partial \btheta} \log \pi_\btheta (A_t | S_t)\right] = 0$.
BPG with a baseline has lower variance so that the estimated gradient is closer in direction to the true gradient.

We use batch sizes of $100$ trajectories per iteration for Gridworld experiments and size $500$ for the continuous control tasks.
The step-size parameter was determined by a sweep over $[10^{-2},10^{-6}]$

\paragraph{Early Stopping Criterion}

In all experiments we run BPG for a fixed number of iterations.
In general, BPS can continue for a fixed number of iterations or until the variance of the IS estimator stops decreasing.
The true variance is unknown but can be estimated by sampling a set of $k$ trajectories with $\btheta_i$ and computing the \textit{uncentered} variance:
$ \frac{1}{k}\sum_{j=0}^k \operatorname{OPE}(H_j,\btheta_j)^2$.
This measure can be used to empirically evaluate the quality of each $\btheta$ or determine when a BPS algorithm should terminate behavior policy improvement.

\end{appendix}


\begin{thebibliography}{25}
\providecommand{\natexlab}[1]{#1}
\providecommand{\url}[1]{\texttt{#1}}
\expandafter\ifx\csname urlstyle\endcsname\relax
  \providecommand{\doi}[1]{doi: #1}\else
  \providecommand{\doi}{doi: \begingroup \urlstyle{rm}\Url}\fi

\bibitem[Bastani(2014)]{bastani2014model}
Bastani, Meysam.
\newblock \emph{Model-free intelligent diabetes management using machine
  learning}.
\newblock PhD thesis, Master’s thesis, Department of Computing Science,
  University of Alberta, 2014.

\bibitem[Bertsekas \& Tsitsiklis(2000)Bertsekas and
  Tsitsiklis]{bertsekas2000gradient}
Bertsekas, Dimitri~P. and Tsitsiklis, John~N.
\newblock Gradient convergence in gradient methods with erros.
\newblock 10:\penalty0 627--642, 2000.

\bibitem[Ciosek \& Whiteson(2017)Ciosek and Whiteson]{ciosek2017offer}
Ciosek, Kamil and Whiteson, Shimon.
\newblock O{F}{F}{E}{R}: Off-environment reinforcement learning.
\newblock In \emph{Proceedings of the 31st AAAI Conference on Artificial
  Intelligence (AAAI)}, 2017.

\bibitem[Desai \& Glynn(2001)Desai and Glynn]{desai2001simulation}
Desai, Paritosh~Y and Glynn, Peter~W.
\newblock Simulation in optimization and optimization in simulation: {A}
  {M}arkov chain perspective on adaptive {M}onte {C}arlo algorithms.
\newblock In \emph{Proceedings of the 33rd conference on Winter simulation},
  pp.\  379--384. IEEE Computer Society, 2001.

\bibitem[Duan et~al.(2016)Duan, Chen, Houthooft, Schulman, and
  Abbeel]{duan2016benchmarking}
Duan, Yan, Chen, Xi, Houthooft, Rein, Schulman, John, and Abbeel, Pieter.
\newblock Benchmarking deep reinforcement learning for continuous control.
\newblock In \emph{In Proceedings of the 33rd International Conference on
  Machine Learning}, 2016.

\bibitem[Frank et~al.(2008)Frank, Mannor, and Precup]{frank2008reinforcement}
Frank, Jordan, Mannor, Shie, and Precup, Doina.
\newblock Reinforcement learning in the presence of rare events.
\newblock In \emph{Proceedings of the 25th International Conference on Machine
  learning}, pp.\  336--343. ACM, 2008.

\bibitem[Hammersley \& Handscomb(1964)Hammersley and Handscomb]{Hammersley1964}
Hammersley, JM and Handscomb, DC.
\newblock Monte {C}arlo methods, methuen \& co.
\newblock \emph{Ltd., London}, pp.\ ~40, 1964.

\bibitem[Jiang \& Li(2016)Jiang and Li]{jiang2016doubly}
Jiang, Nan and Li, Lihong.
\newblock Doubly robust off-policy evaluation for reinforcement learning.
\newblock In \emph{Proceedings of the 33rd International Conference on Machine
  Learning (ICML)}, 2016.

\bibitem[Kearns \& Singh(2002)Kearns and Singh]{kearns2002near}
Kearns, Michael and Singh, Satinder.
\newblock Near-optimal reinforcement learning in polynomial time.
\newblock \emph{Machine Learning}, 49\penalty0 (2-3):\penalty0 209--232, 2002.

\bibitem[Lillicrap et~al.(2015)Lillicrap, Hunt, Pritzel, Heess, Erez, Tassa,
  Silver, and Wierstra]{lillicrap2015continuous}
Lillicrap, Timothy~P., Hunt, Jonathan~J., Pritzel, Alexander, Heess, Nicolas,
  Erez, Tom, Tassa, Yuval, Silver, David, and Wierstra, Daan.
\newblock Continuous control with deep reinforcement learning.
\newblock \emph{CoRR}, abs/1509.02971, 2015.

\bibitem[Precup et~al.(2000)Precup, Sutton, and Singh]{precup2000eligibility}
Precup, D., Sutton, R.~S., and Singh, S.
\newblock Eligibility traces for off-policy policy evaluation.
\newblock In \emph{Proceedings of the 17th International Conference on Machine
  Learning}, pp.\  759--766, 2000.

\bibitem[Schulman et~al.(2015)Schulman, Levine, Moritz, Jordan, and
  Abbeel]{schulman2015trust}
Schulman, John, Levine, Sergey, Moritz, Philipp, Jordan, Michael, and Abbeel,
  Pieter.
\newblock Trust region policy optimization.
\newblock In \emph{Proceedings of the 32nd International Conference on Machine
  Learning ( ICML)}, 2015.

\bibitem[Sen \& Singer(1993)Sen and Singer]{sen1993large}
Sen, P.K. and Singer, J.M.
\newblock \emph{Large Sample Methods in Statistics: An Introduction with
  Applications}.
\newblock Chapman \& Hall, 1993.

\bibitem[Strehl et~al.(2009)Strehl, Li, and Littman]{strehl2009reinforcement}
Strehl, Alexander~L, Li, Lihong, and Littman, Michael~L.
\newblock Reinforcement learning in finite mdps: P{A}{C} analysis.
\newblock \emph{Journal of Machine Learning Research}, 10:\penalty0 2413--2444,
  2009.

\bibitem[Sutton \& Barto(1998)Sutton and Barto]{sutton1998reinforcement}
Sutton, Richard~S. and Barto, Andrew~G.
\newblock \emph{Reinforcement Learning: An Introduction}.
\newblock MIT Press, 1998.

\bibitem[Sutton et~al.(2000)Sutton, McAllester, Singh, and
  Mansour]{sutton2000policy}
Sutton, Richard~S., McAllester, David, Singh, Satinder, and Mansour, Yishay.
\newblock Policy gradient methods for reinforcement learning with function
  approximation.
\newblock In \emph{Proceedings of the 13th Conference on Neural Information
  Processing Systems (NIPS)}, 2000.

\bibitem[Theocharous et~al.(2015)Theocharous, Thomas, and
  Ghavamzadeh]{theocharous2015personalized}
Theocharous, Georgios, Thomas, Philip~S., and Ghavamzadeh, Mohammad.
\newblock Personalized ad recommendation systems for life-time value
  optimization with guarantees.
\newblock In \emph{Proceedings of the 27th International Joint Conference on
  Artificial Intelligence (IJCAI)}, pp.\  1806--1812, 2015.

\bibitem[Thomas(2015)]{Thomas2015notation}
Thomas, Philip~S.
\newblock A notation for {M}arkov decision processes.
\newblock \emph{ArXiv}, arXiv:1512.09075v1, 2015.

\bibitem[Thomas \& Brunskill(2016)Thomas and
  Brunskill]{thomas2016data-efficient}
Thomas, Philip~S. and Brunskill, Emma.
\newblock Data-efficient off-policy policy evaluation for reinforcement
  learning.
\newblock In \emph{Proceedings of the 33rd International Conference on Machine
  Learning (ICML)}, 2016.

\bibitem[Thomas et~al.(2015)Thomas, Theocharous, and
  Ghavamzadeh]{thomas2015off-policy}
Thomas, Philip~S., Theocharous, Georgios, and Ghavamzadeh, Mohammad.
\newblock High confidence off-policy evaluation.
\newblock In \emph{Proceedings of the AAAI Conference on Artificial
  Intelligence (AAAI)}, 2015.

\bibitem[van Seijen \& Sutton(2014)van Seijen and Sutton]{van2014true}
van Seijen, Harm and Sutton, Richard~S.
\newblock True online {T}{D} ($\lambda$).
\newblock In \emph{Proceedings of the 31st International Conference on Machine
  Learning (ICML)}, volume~14, pp.\  692--700, 2014.

\bibitem[Veness et~al.(2011)Veness, Lanctot, and Bowling]{Veness2011}
Veness, J., Lanctot, M., and Bowling, M.
\newblock Variance reduction in {M}onte-{C}arlo tree search.
\newblock In \emph{Proceedings of the 24th Conference on Neural Information
  Processing Systems}, pp.\  1836--1844, 2011.

\bibitem[White \& Bowling(2009)White and Bowling]{White2009}
White, M. and Bowling, M.
\newblock Learning a value analysis tool for agent evaluation.
\newblock In \emph{Proceedings of the 21st International Joint Conference on
  Artificial Intelligence (IJCAI)}, pp.\  1976--1981, 2009.

\bibitem[Williams(1992)]{williams1992simple}
Williams, Ronald~J.
\newblock Simple statistical gradient-following algorithms for connectionist
  reinforcement learning.
\newblock \emph{Machine learning}, 8\penalty0 (3-4):\penalty0 229--256, 1992.

\bibitem[Zinkevich et~al.(2006)Zinkevich, Bowling, Bard, Kan, and
  Billings]{Zinkevich2006}
Zinkevich, M., Bowling, M., Bard, N., Kan, M., and Billings, D.
\newblock Optimal unbiased estimators for evaluating agent performance.
\newblock In \emph{Proceedings of the 21st National Conference on Artificial
  Intelligence (AAAI)}, pp.\  573--578, 2006.

\end{thebibliography}

\end{document}